\newif\ifeq@done \eq@donefalse
\newif\ifcorr@done \corr@donefalse
\newcommand{\eqcontrib}{%
  \footnotemark[1]%
  \ifeq@done\relax\else
    \g@addto@macro\@thanks{\footnotetext[1]{Equal contribution.}}%
    \eq@donetrue
  \fi
}
\newcommand{\corrauthor}[1]{%
  \footnotemark[2]%
  \ifcorr@done\relax\else
    \g@addto@macro\@thanks{\footnotetext[2]{#1\vspace{-3.8ex}}}%
    \corr@donetrue
  \fi
}
\newtheorem{theorem}{Theorem}
\definecolor{bestcell}{RGB}{220,255,220}    % 绿色高亮最佳值
\definecolor{modelrow}{gray}{0.95}          % 模型组浅灰背景
\definecolor{darkblue}{rgb}{0, 0, 0.5}
\title{\raisebox{-0.2cm}{\includegraphics[height=1cm]{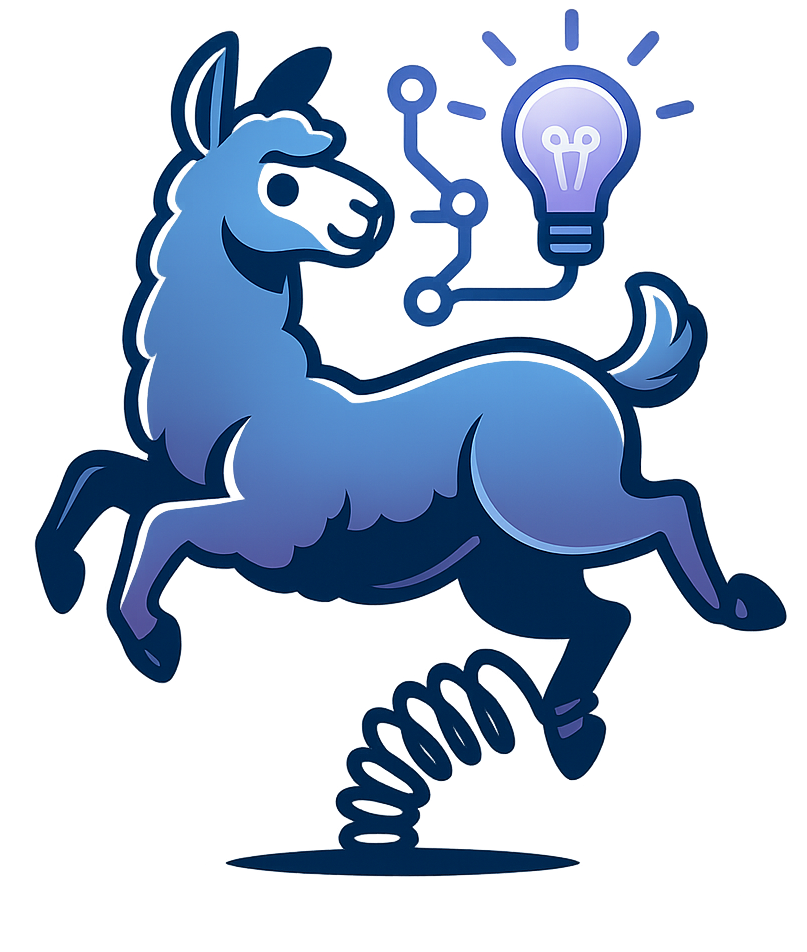}} AdaReasoner: Adaptive Reasoning Enables More Flexible Thinking}
\author{%
\begin{tabular}{c}
Xiangqi Wang\textsuperscript{1}\eqcontrib\quad
Yue Huang\textsuperscript{1}\eqcontrib\quad
Yanbo Wang\textsuperscript{2}\quad
Xiaonan Luo\textsuperscript{1}\quad
Kehan Guo\textsuperscript{1} \\
[-0.3ex] % 控制两行之间的紧凑程度
Yujun Zhou\textsuperscript{1}\quad
Xiangliang Zhang\textsuperscript{1}\corrauthor{Corresponding author: \texttt{xzhang33@nd.edu}}
\end{tabular}\\[3pt]
\textsuperscript{1} University of Notre Dame
\textsuperscript{2}MBZUAI\\[4pt]
\texttt{\{xwang76, yhuang37, xluo6, kguo2, yzhou25, xzhang33\}@nd.edu} \\
\texttt{yanbo.wang@mbzuai.ac.ae}
}
\begin{document}

\maketitle

\begin{abstract}
%Additionally, to counteract LLM inertia and its tendency to generate similar responses even when prompted differently, our approach generates multiple candidate responses, evaluates them through majority voting, and selects the best answer.

%Effective reasoning configuration is critical for large language models (LLMs) to for tasks that demand adaptive or creative insight—yet existing prompting methods rely on static setups and fail on challenges like joke generation or metaphor interpretation. 
LLMs often need effective configurations, like temperature and reasoning steps, to handle tasks requiring sophisticated reasoning and problem-solving, ranging from joke generation to mathematical reasoning. Existing prompting approaches usually adopt general-purpose, fixed configurations that work “well enough” across tasks but seldom achieve task-specific optimality.
To address this gap, we introduce AdaReasoner, an LLM-agnostic plugin designed for any LLM to automate adaptive reasoning configurations for tasks requiring different types of thinking. AdaReasoner is trained using a reinforcement learning (RL) framework, combining a factorized action space with a targeted exploration strategy, along with  a pretrained reward model to optimize the policy model for reasoning configurations with only a few-shot guide.
%a reinforcement learning–driven framework that dynamically optimizes reasoning configurations. AdaReasoner combines a factorized action space with a targeted exploration strategy and employs a pretrained reward model to assess response quality. 
AdaReasoner is backed by theoretical guarantees and experiments of fast convergence and a sublinear policy gap. Across six different LLMs and a variety of reasoning tasks, it consistently outperforms standard baselines, preserves out-of-distribution robustness, %being robust to reward noise, adapts to multiple RL algorithms, 
and yield gains on knowledge-intensive tasks through tailored prompts. Introduction of this paper can also be viewed publicly at \url{https://mine-lab-nd.github.io/project/adareasoner.html}.
%Chain-of-Thought (CoT) reasoning improves the problem-solving skills of large language models (LLMs), yet it often falls short on tasks that demand adaptive reasoning or creativity--such as crafting jokes or interpreting metaphors. To better tailor and optimize LLM adaptive reasoning capability for diverse question contexts, we introduce AdaReasoner, an adaptive framework based on few-shot learning that refines LLM reasoning configuration through reinforcement learning (RL). Specifically, AdaReasoner leverages reinforcement learning with few-shot examples to train an adapter that acts as an LLM reasoning guider, steering the LLM through distinct reasoning processes as guided by a pretrained reward model. For each question, it conducts multiple trials to produce a thorough evaluation, and with deliberate factorized action space makes estimation fast and accurate.  Experiments in mathematical reasoning, metaphor recognition, and various other reasoning tasks demonstrate that AdaReasoner significantly outperforms standard CoT methods and other baselines. Additional experiments prove that it's generalizable to Out-of-distribution (OOD) datasets, robust to reward signal, thorough in configuration design and compatible for other RL method.  Findings show that with specific instruction prompt configuration design, performance can also increase on knowledge-intensive questions.
\end{abstract}

\section{Introduction}

% \yue{I think you need to rewrite this introduction as it lacks logic and motivation. Here is a writing guidance for it: 1) LLM achievements + citations. 2) Reasoning prompt's achievements + improve reasoning capability. 3) For current related work, what's the disadvantage of them? 4) What are the challenges to solving these disadvantages? 5) Describe our method. 6) Contributions}

%Large Language Models (LLMs) have demonstrated remarkable progress in a variety of natural language processing tasks, from syntactic parsing to complex problem-solving \yue{Here needs citations}. A promising breakthrough in enhancing LLMs' reasoning abilities is the adoption of chain-of-thought (CoT) prompting, where the model generates intermediate logical steps before reaching the final answer \citep{wei2022chain}. \yue{Following that, there proposed many other variants of reasoning prompt techniques like ToT, xxx, and xxx (citations here).}

%\KG{The motivation section needs strengthening. To better justify the focus on prompting, clearly articulate its significant impact on reasoning task performance. Including specific examples demonstrating how different prompts lead to varying outcomes would greatly enhance this justification.}

Large Language Models (LLMs) have achieved impressive advancements across a wide range of natural language processing tasks, including syntactic parsing~\citep{ma2024llmparser}, complex scientific reasoning~\citep{wang2023scibench}, and commonsense knowledge answering~\citep{zhao2023large}. As the model size and training data scale up, LLMs have demonstrated the ability to surpass human-level accuracy on certain benchmarks~\citep{srivastava2022beyond}, highlighting their emerging capacity for sophisticated reasoning and problem-solving.

To better enhance LLM reasoning capabilities--and to push their performance closer to, or even beyond, human-level reasoning--numerous prompting-based strategies have been proposed. Chain-of-Thought (CoT) prompting encourages explicit decomposition of complex problems into intermediate steps~\citep{wei2022chain, zhou2022least}, while Tree-of-Thought (ToT) generalizes this idea by exploring multiple branching reasoning paths~\citep{yao2023tree}. Sampling-based approaches like Best-of-N improve robustness by selecting the most coherent reasoning path from diverse candidates~\citep{ji2023towards}, and automatic prompt optimization techniques aim to systematically discover prompts that better facilitate multi-step reasoning~\citep{zhang2022automatic, shum2023automatic}. If samples of the same type of question are provided, In-Context Learning (ICL)~\citep{brown2020language}   also prompts LLM with few-shot examples with advanced performance. 

Despite these advances, LLM reasoning remains highly configuration‐sensitive: as \autoref{fig:motivation} shows, GPT‑4o’s accuracy on the metaphor expression classification task~\citep{tong2024metaphor} swings wildly under different reasoning configurations. While divergent reasoning prompts and fewer reasoning steps could greatly improve performance, temperature as 1 instead drown out useful reasoning with noise, negating any benefit from the added randomness. However, previous methods have not targeted tuning on these parameters. CoT~\citep{wei2022chain, zhou2022least} and ToT~\citep{yao2023tree} apply fixed reasoning structures that fail to generalize to creative or subjective tasks (e.g. spatial planning \citep{stechly2024chain}). Best-of-N~\citep{ji2023towards} rely on unguided generation,  suffering from a ``garbage in, garbage out'' effect. Automatic prompt optimization~\citep{zhang2022automatic, shum2023automatic}  focuses on static templates and overlooks crucial hyperparameters like temperature, failing  to adjust  reasoning strategies.  While ICL~\citep{brown2020language}   extracts some cues from input questions, it remains brittle under context perturbations~\citep{mueller2023context}, and its reliance on implicit pattern matching has been shown to be less effective than direct structured reasoning~\citep{tang2023large}. These limitations call a need for an adaptive prompting configuration strategy for LLMs to handle various sophisticated reasoning. 

%However, overcoming these limitations entails several core challenges. First, the absence of a unified, adaptive reasoning mechanism poses a fundamental risk to the general usability of LLMs across tasks with diverse reasoning demands. Current strategies are brittle and overly specialized; without an adaptive framework that generalizes across logical, creative, and subjective domains, LLMs will remain fragile tools--impressive in benchmarks but unreliable in real-world deployments where task formats are fluid and unpredictable. Second, the demand for robust generalization often comes at the cost of massive computational overhead, a bottleneck that significantly limits practical deployment. If adaptive reasoning strategies cannot be made both effective and efficient, we risk creating methods that work only in idealized academic settings, failing to transfer to scalable, on-the-fly applications.

\begin{wrapfigure}{r}{0.35\textwidth}
    \centering
    \small
    \vspace{-10pt} 
    \includegraphics[width=\linewidth]{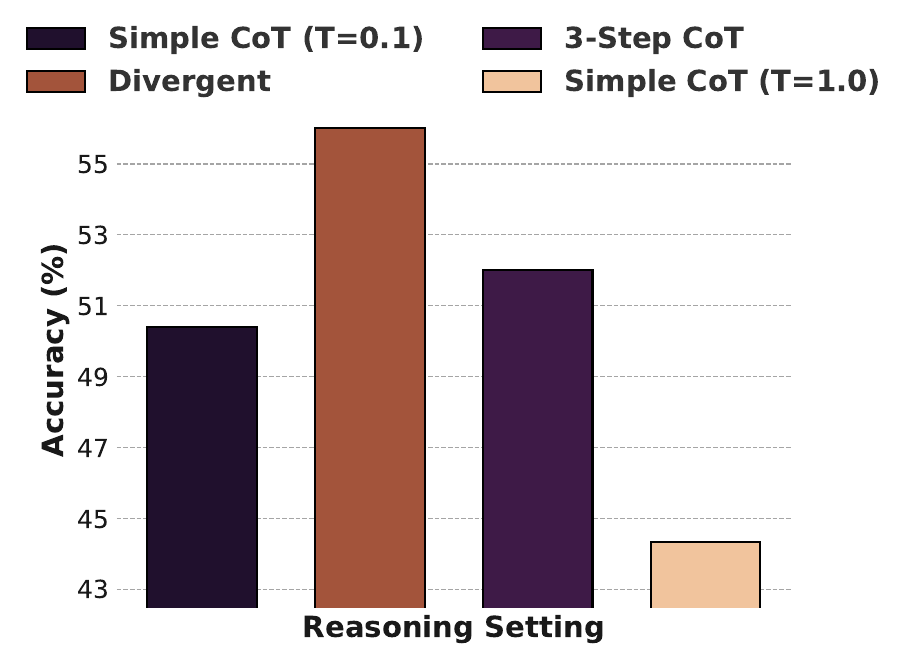} 
    \vspace{-12pt}
    \caption{\small{Performance of different CoT settings on the metaphor dataset~\citep{tong2024metaphor}. 
    %\textbf{3-Step}: Limits reasoning to 3 steps. 
    %\textbf{Divergent}: Encourages diverse reasoning. 
    %\textbf{Temp.=1}: Uses a simple CoT prompt with temperature=1. 
    The default temperature  is 0.1 if not specified.}}
    \vspace{-15pt} 
    \label{fig:motivation}
\end{wrapfigure}
%\textcolor{red}{Double check needed here.} 
However, identification of the optimal prompting configuration for LLMs is a non-trivial task. First, task types span logical, creative, and subjective domains, often in combination, so that many queries cannot be neatly categorized or matched with  pre-set configurations template. This necessitates strategies that are highly adaptive and tailored to the specific demands of each question. Second, LLM reasoning capability is sensitive to the configuration settings that involve multiple factors, as shown in \autoref{fig:motivation}.  The search space spanned by these factors when selecting effective configurations is combinatorially large. This presents a challenge for building a decision-making model that tailors the configuration for each input task.
Third, while building such a model using a data-driven approach is promising, exhaustively collecting training samples for every possible configuration is computationally expensive and impractical.
%shows that the multiple factors impose great change to LLM reasoning capability. Yet the configuration space—spanning temperature, reasoning steps, and reasoning method prompt choices—is combinatorially large, making it difficult to devise an optimal strategy in the configuration space for selecting effective configurations, especially in real-world deployments. Third exhaustively traversing all possible task samples is infeasible to build the strategy. 
This necessitates an approach that can generalize from a limited set of examples and capture reasoning patterns that are transferable across similar tasks.

We introduce AdaReasoner, an LLM-agnostic plugin designed to automate adaptive reasoning configurations for tasks requiring diverse types of thinking. When integrated with an LLM, AdaReasoner is trained using a reinforcement learning (RL) framework. In this setup, AdaReasoner acts as a decision-making agent, where the state is defined by the current task presented to the LLM, reflecting the nature of the reasoning required (e.g., logical, creative, or subjective). The action corresponds to selecting a configuration from an action space composed of three key hyperparameters: (i) the reasoning instruction formats, (ii) the generation temperature, and (iii) the number of reasoning steps. To enable AdaReasoner to learn the most effective configuration policy, a pretrained reward model is employed to evaluate the effectiveness of the reasoning configuration. This model provides feedback to guide the agent's learning,  enabling it to efficiently acquire effective configurations with only limited guidance (i.e., few-shot learning). To facilitate exploration and improve generalization, we employ a Boltzmann exploration mechanism, enabling the agent to explore and optimize configurations more effectively during training.
Once trained, AdaReasoner is used as a plug-in to the LLM, providing   adaptive reasoning configurations that allow the model to adjust its reasoning approach based on the task at hand.

Our contributions can be summarized as the followings:
\vspace{-0.1in}
\begin{itemize}[leftmargin=*, itemsep=0pt, parsep=0pt]
  \item We introduce AdaReasoner, an LLM-agnostic plugin that automates adaptive reasoning configurations for tasks requiring diverse types of thinking. %the first adaptive reasoning framework for LLMs, leveraging factorized few-shot reinforcement learning to guide diverse reasoning paths.
  \item AdaReasoner leverages a reinforcement learning framework with a factorized action space.  %, enabling dynamic adjustment of reasoning strategies based on the task. 
  Its training is data-efficient yet scalable, requiring only a small number of samples for each task aided by the use of the Boltzmann exploration mechanism.
  \item Extensive evaluations on diverse tasks show that AdaReasoner outperforms standard CoT and baselines, and sustains strong OOD performance. % remains robust to reward-signal noise, compatible with multiple RL algorithms and thorough on design of action space.
  %\item Ablation studies confirm the contribution of each component and demonstrate that specific tailored prompt configurations can boost performance on knowledge-intensive tasks.
\end{itemize}

%\begin{itemize}[leftmargin=*, itemsep=0pt, parsep=0pt]
%  \item We propose the first adaptive reasoning framework for LLMs that incorporates self-reflection to guide reasoning and increase flexibility in generative model.
%  \item Extensive experimental evaluations reveal that our approach significantly outperforms conventional methods.
  %\item Ablation studies confirm the effectiveness of dynamically adapting each component of our reasoning toolbox, and suitability across all RL algorithms.
%\end{itemize}

%Extensive experiments on a range of inference tasks--including logical and commonsense reasoning, metaphor recognition, and trustful generation--demonstrate that our adaptive framework significantly outperforms traditional CoT prompting and self-consistency methods. Ablation studies show that xxx. Case studies also reveal that reasoning adjusting can't excel greatly at tasks challenging LLM's knowledge boundaries, which guides the next step of CoT refining method.

\vspace{-0.1in}
\section{Related Work of Reasoning in LLMs} \vspace{-0.08in}

The pursuit of enhanced reasoning capabilities in LLMs has spurred diverse research trajectories, beginning with foundational techniques like Chain-of-Thought (CoT) prompting \citep{wei2022chain}. CoT enables LLMs to articulate intermediate steps, significantly improving performance on complex tasks. However, its efficacy can be hampered by sensitivity to prompt formulation \citep{sprague2024cot, puerto2024fine} and limitations in subjective or creative domains \citep{chochlakis2024larger, xu2024exploring}, sometimes even degrading performance where brevity is key \citep{liu2024mind}. To mitigate these issues and reduce manual effort, innovations such as Automatic CoT (Auto-CoT) \citep{zhang2022automatic, shum2023automatic} emerged, automating the generation of effective reasoning exemplars. Further advancements include structured reasoning frameworks like Tree-of-Thoughts (ToT) \citep{yao2023tree} and Graph-of-Thoughts (GoT) \citep{besta2024graph}, which allow models to explore and evaluate multiple reasoning pathways, alongside methods like CoT-influx \citep{huang2023fewer} that optimize few-shot CoT contexts.

To bolster the robustness and reliability of LLM reasoning, researchers have explored self-correction and learning-based paradigms. Self-consistency techniques \citep{wang2022self}, often realized through Best-of-N sampling, leverage the generation of multiple diverse reasoning paths and subsequent aggregation (e.g., via majority voting) to improve answer accuracy. Complementary to this, self-reflection mechanisms, as seen in Self-Refine \citep{madaan2023self} and Reflexion \citep{shinn2023reflexion}, empower LLMs to iteratively critique and enhance their own outputs, akin to human error correction, with some approaches fine-tuning with divergent CoT to specifically boost these capabilities \citep{puerto2024fine}. Reinforcement Learning (RL) has also become a cornerstone for optimizing reasoning, from general alignment via RLHF \citep{ouyang2022training} to specialized reward models that guide the LLM towards more accurate and effective thought processes \citep{jaech2024openai}. Models like DeepSeek-R1 \citep{guo2025deepseek} exemplify LLMs fine-tuned with RL to excel at intricate reasoning, sometimes learning to control their own reasoning flow through meta-actions.

The nuanced control of generation parameters and adaptive hyperparameter tuning represent another critical frontier. The stochastic decoding settings, such as temperature, significantly affect output diversity and, consequently, reasoning quality and creativity \citep{renze-2024-effect}. Higher diversity can fuel methods like self-consistency but requires careful management to maintain coherence. Recent work has thus focused on automated optimization of prompt configurations, decoding parameters, and even enabling LLMs to self-regulate their generation strategies, as demonstrated by Hyperparameter-Aware Generation (HAG) \citep{wang2024llm}. Our AdaReasoner contributes to this line of research by introducing an adaptive framework that explicitly manages a toolbox of reasoning hyperparameters, including the reasoning method prompt, temperature, and number of reasoning steps, using an RL-trained agent to dynamically tailor the reasoning process to individual inputs, coupled with self-reflection and a robust selection mechanism for enhanced flexibility.

\section{AdaReasoner}

\textbf{Motivation.} Even though CoT and similar LLM reasoning methods have been studied as generally efficient and helpful,  they still cannot achieve ideal performance across all types of questions. For example, tasks like joke generation or metaphor interpretation often require divergent and creative reasoning chain ~\citep{zhong2024let}. For more complex reasoning tasks, stronger and more explicit reasoning instructions would be beneficial~\citep{lin2024critical}. Thus, adapting LLM configurations tailored for specific tasks is crucial for achieving better overall performance. As illustrated in~\autoref{fig:pipeline}, %we design AdaReasoner to adapt  reasoning configuration and employed parallel generation for robust performance.
we design AdaReasoner to adapt reasoning configurations by taking actions as a combination of different hyperparameters for LLMs. The inference/evaluation process is illustrated by the black arrows, while the training flow is depicted by the cyan arrows.

\begin{figure}[t]
    \centering
    \includegraphics[width=0.9\linewidth]{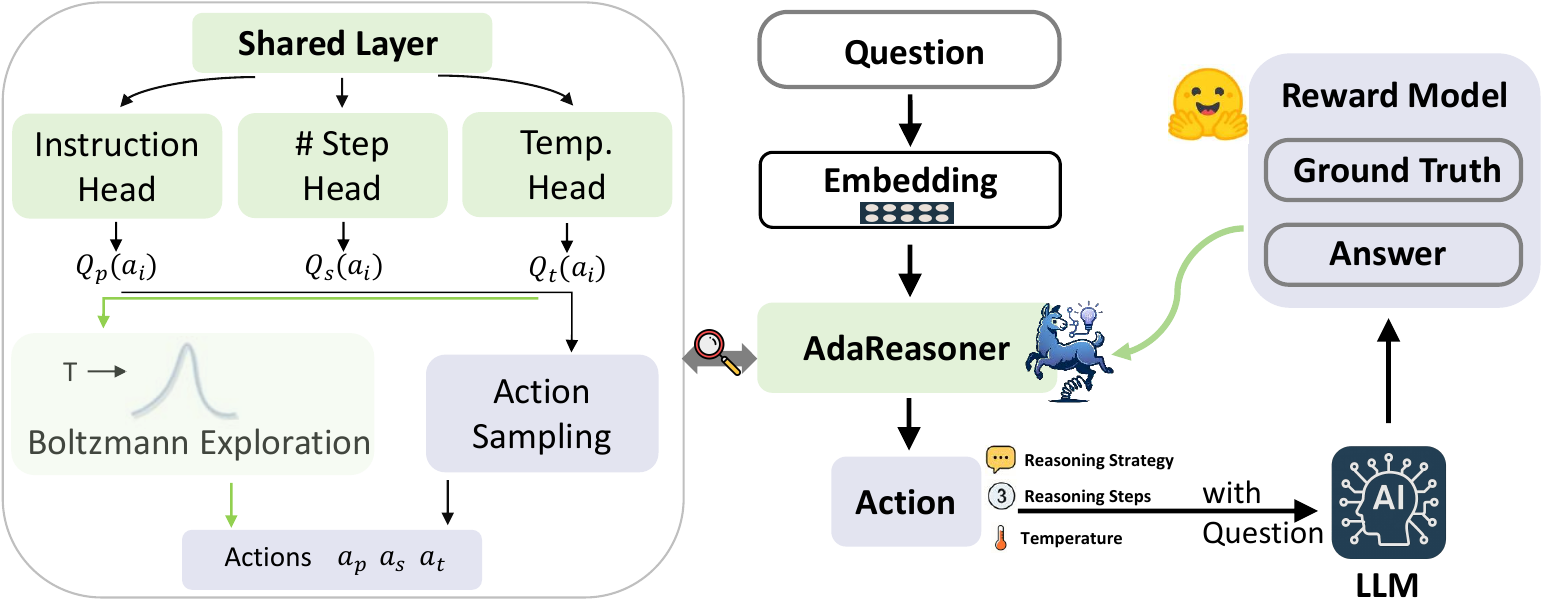}
    \caption{The proposed framework of using AdaReasoner for automating the % the AdaReasoner framework. Given a question embedding, AdaReasoner selects 
    reasoning configurations (instructions, steps, temperature). During training, configurations actions are sampled with Boltzmann exploration, guiding LLMs to generate answers, which are then evaluated by a reward model for policy optimization.}
    %\caption{This figure presents the AdaReasoner framework, comprising three stages: candidate answer generation via LLMs under varied hyperparameter configurations; adapter training using question embeddings and candidate answer rewards to optimize hyperparameter selection policy; and test-time inference, where the adapter guides LLMs to generate answers.}
    \label{fig:pipeline}
    \vspace{-10pt}
\end{figure}

\textbf{Problem Formulation.} The goal of AdaReasoner is to adaptively find the most effective hyperparameter configuration \( a \) for a given question \( q \) % given set or type question \( q \), 
such that an LLM (denoted as $\Phi$)  generates the correct reasoning answer $\Phi(q|a)$. More specifically,  the  configuration \( a \) is a 3-dimensional vector, where each element corresponds to one of the three hyperparameters: \(a_t\) (generation temperature), \(a_p\) (reasoning instruction format), and \(a_s\) (the number of reasoning steps). 
Denoting AdaReasoner as \(\Pi_{\Theta}\), our goal is to train its neural network weights \( \Theta\) to learn the optimal policy for deciding the configuration  \( a \) given a question \( q \). By considering the question \( q \) along with the LLM $\Phi$ as the state, the decision-making process is represented as \( a = \Pi_{\Theta} (q, \Phi)\).
During training, we employ a pre-trained model (e.g. DeBERTa in huggingface) as the reward model $r$ to provide feedback on the generated answer by comparing it to the ground truth reference $R$ from the training data, i.e., $r(\Phi(q|a),R)$.
In this approach, we address the issue that it is not possible to directly evaluate  the quality of generated configuration $a$,  as there is no ground truth for $a$ itself. Instead, the effectiveness of $a$  is judged indirectly based on the resulting answer  $\Phi(q|a)$, ensuring that the AdaReasoner agent is informed about the quality of its reasoning configuration through the answer's relevance and accuracy.

Within the broader RL framework, our study can be viewed as a \emph{multi-armed bandit} problem, where the \emph{arms} represent different configuration actions. Each question is an independent task (state), where the agent determines the actions (sets the values for all arms), receives a reward based on the effectiveness of the answer, and then moves on to the next task. The objective is to optimize the selection of hyperparameters to maximize the reward for each question.
Therefore,  given a set of training questions and reference answer samples  \(\mathcal{D}_{\text{train}} = \{(q_i, R_i)\}_{i=1}^M\), the objective is to train the AdaReasoner agent as 
\begin{equation}
\Theta^* = \arg\max_{\Theta} \ \mathbb{E}_{(q,R)\sim\mathcal{D}_{\text{train}}} \ \mathbb{E}_{a\sim\Pi_\Theta(a|q,\Phi)} \Big[ r\Big(\Phi(q|a),R\Big) \Big].
\label{eq:obj}
\end{equation}
Theoretical analysis about  AdaReasoner is presented in \cref{subsec: theory}, with a step-by-step description in Algorithm \ref{alg:adareasoner_per_question}.

\subsection{Hyperparameter Configuration (Action)}
\label{sec:ToolboxDesign}

%\yue{Missing details: 1) why do you select temperature, prompts, and reasoning methods? Give some citations to show their significance and support your motivation. 2) What's the difference between prompts and reasoning methods?}

As mentioned earlier, we consider three hyperparameters in the reasoning configuration: 1) the generation temperature \(a_t\); 2) the format of reasoning instructions  \(a_p\); and 3) the number of reasoning steps in CoT \(a_s\), for several reasons. First, they have substantial impacts on the reasoning performance. 
Previous studies have revealed that 
%To support both proprietary and open-weight LLMs, we introduce a comprehensive configuration schema encompassing generation temperature, reasoning method prompt and reasoning step instructions\footnote{We use temperature as the sole hyperparameter to guide the LLM, as Microsoft code source suggests that tuning top\_p alongside temperature is not recommended. \url{https://microsoft.github.io/autogen/0.2/docs/Use-Cases/enhanced_inference/}}. Previous studies have strengthened their effect in LLM generation. The 
the generation temperature modulates the diversity of model outputs, often yielding markedly different responses  when varied \citep{renze2024effect}. The number of reasoning steps reflects the depth and thoroughness of the inference process and it thus could influence the reasoning accuracy~\citep{dutta2024think, jin2024impact}.  The format of reasoning instructions, such as backward reasoning and step-by-step deduction, also plays a crucial role in guiding the model’s reasoning process \citep{almeida2024exploring, wang2025tutorial}.  % the prompting method harnesses diverse inferential strategies, such as backward reasoning, to enrich the model’s output \citep{almeida2024exploring, wang2025tutorial}.
Second, the settings of these three hyperparameters are adaptable for both proprietary and open-weight LLMs, with enhancement of adareasoner's versatility.
%. Focusing on these hyperparameters enhances our approach's versatility, making it applicable to any LLM.
Third, we are aware of other hyperparameters that may also impact reasoning, such as the $p$ in top-$p$ sampling during generation and the random seed. However, we exclude $p$ because tuning top-$p$ alongside temperature is not recommended  together with temperature \cite{autogen_enhanced_inference}.  Additionally, our empirical evaluation found that varying the random seed could not be beneficial for improving LLMs' reasoning performance (as shown in Section \ref{sec:ablation}).

%AdaReasoner is designed to be compatible with all RL algorithms and serves as a seamlessly transferable plug-in that utilizes LLM as a black-box generator.

%Hereby we detail our design of hyperparameter selection space. To ensure practicality, reasoning steps are constrained to avoid extreme values, while temperature is discretized with intervals into predefined values, both detailed in~\cref{app:tech} thus respectively constructing action space $\mathcal{A}_s, \mathcal{A}_t$ representing step number and temperature.

To ensure practical feasibility, these configuration actions are discretized with a finite set of options. Specifically, the number of reasoning steps is bounded to avoid extreme values, defined as $\mathcal{A}_s $ as integers set, and temperature is discretized as set $\mathcal{A}_t$. 
The options for reasoning instructions, denoted as $\mathcal{A}_p$, are constructed based on 
%We now elaborate on the design of our hyperparameter selection space. To ensure practical feasibility, the number of reasoning steps is bounded to avoid extreme values, and the temperature is discretized into predefined intervals. These design choices define the action spaces $\mathcal{A}_s$ and $\mathcal{A}_t$, corresponding to reasoning steps and temperature, respectively (see \cref{app:tech} for details). To construct the reasoning method prompt space, we adopt 
a compositional design grounded in structure-mapping theory from cognitive psychology~\citep{gentner1983structure}, which models human reasoning by composing a \textbf{core} reasoning structure with \textbf{contextual} modifications. Accordingly, each reasoning instruction is factorized into two components: a \textbf{base} component, which specifies the overall cognitive strategy (e.g., creative thinking, analogical mapping, self-audit~\citep{byrne2019human}), and a \textbf{variation}, which modulates the emphasis on specific parts of the question or modifies the reasoning surface form. 
For example, a \textbf{base} \emph{``Apply creative reasoning to unearth unconventional insights and challenge standard assumptions''} could be combined with a \textbf{variation} \emph{``Use simple, straightforward language to guarantee clarity and accessibility''} for guiding the reasoning of divergent thinking types of problems. The same  \textbf{base}, when combined with a \textbf{variation} \emph{Validate conclusions by aligning them with established principles or empirical data}, such instruction is useful for critical thinking types of reasoning problems.
%\textcolor{red}{?? here the sentence is not complete? Need another combination of base and variation to show the reasonability of combing them. }
%The same  \textbf{base}, when combined with a \textbf{variation} \emph{Validate conclusions by aligning them with established principles or empirical data}, such instruction is useful for critical thinking types of reasoning problems.
Detailed  of the base and variation components and their instantiation are provided in~\cref{app:tech}. 
The reasoning instruction action space, $\mathcal{A}_p$, is composed of pairs in the form of \{base, variation\}. Each action $a_p$ corresponds to one of the possible combinations of a base and its associated variation.
%This results in a factorized prompt action space, where $\mathcal{A}_p$ is formed as the product of the template and variation spaces. However, since both components address a unified reasoning objective, we treat $\mathcal{A}_p$ as a single integrated action space in our learning framework.

Ultimately, the decision about the action involves selecting a generation temperature $a_t$ from $\mathcal{A}_t$,  a number of reasoning steps $a_s$ from $\mathcal{A}_s$,  and one form of reasoning instruction $a_p$ from $\mathcal{A}_p$. 
\subsection{Design and Training of AdaReasoner}
\label{sec:ReinforcedReasoningGuide}

\textbf{Neural Architecture of AdaReasoner.}
As shown in~\autoref{fig:pipeline}, the input query question, after embedding, undergoes three action selections before being sent to the LLMs for reasoning to generate the answer. While the embedding is performed (e.g. by pre-trained BERT model~\citep{wolf2020transformers}), the trainable neural network parameters of AdaReason consist of three parallel channels, each corresponding to one action, and one shared common layer as in \autoref{fig:pipeline}. The workflow is as follows: let $Embed(q)$ be the embedding of the input question $q$. It is first passed through the common layer to obtain $h=f_{\theta_c}(Embed(q))$, where $\theta_c$ are the parameters of the common layer (e.g., a fully connected MLP), and $h$ captures the features necessary to determine the actions.   

Then $h$ is sent to each channel, where the action selection is performed as 
\begin{equation}
    a_p \sim \pi_p(\cdot | h) = f_{\theta_p}(h), \quad
    a_t \sim \pi_t(\cdot | h) = f_{\theta_t}(h), \quad
    a_s \sim \pi_s(\cdot | h) = f_{\theta_s}(h),
    \label{eq:9}
\end{equation}
where each policy $\pi(\cdot | h)$ is implemented as a feed-forward network. 

This design factorizes the policy $\Pi$ into three independent heads, each handling a specific action selection, significantly reducing optimization space from multiply to summary. Viewing $\Pi$ as multi-armed bandit problem, it is factorizing the joint-arms into set of parallel yet not independent single arm ones. While each head operates independently, they are optimized jointly with a shared latent representation, ensuring coherent decision-making and unified optimization across $a_p$, $a_s$ and $a_t$.
Let $K = M T$ be the total number of steps in learning, where $M$ is the number of training questions and $T$ is the number of trials for each question. We analyze the regret of AdaReasoner, i.e., the reward difference between AdaReasoner and the optimal policy without factorization in App.~\ref{app:reg}. The regret per step is bounded by $O\bigl(({\frac{|\mathcal A|\,\ln|\mathcal A|}{K}})^{0.5}\bigr)$, where $|\mathcal A|$ is the total number of action values:     $\mathcal{A} = \mathcal{A}_p \times \mathcal{A}_t \times \mathcal{A}_s$. This shows that the regret   per step becomes negligible once $K\gg|\mathcal A|\ln|\mathcal A|$, which is consistent with the empirical observation of few-shot convergence, meaning that AdaReasoner learns effectively with relatively few training examples.
%Setting $K = M T$ (with $M$ questions and $T$ trials each), AdaReasoner’s regret grows sublinearly as $O\bigl(\sqrt{K\,|\mathcal A|\,\ln|\mathcal A|}\bigr)$ (App.~\ref{app:reg}), so the per-question regret becomes negligible once $K\gg|\mathcal A|\ln|\mathcal A|$, aligning with empirical few-shot convergence. Furthermore, 
Moreover, under Lipschitz smoothness and bounded variance conditions, Adareasoner with $J(\Theta^*)$ denoted as optimal expected-reward objective and $J(\Theta_0)$ as initial objective achieves an error bound $\frac{2\bigl(J(\Theta^*)-J(\Theta_0)\bigr)}{\eta K} + L\;\eta\;\sigma^2$ (App.~\ref{app:fastcon}), reinforcing rapid convergence in the few-shot setting.

%As shown in \cref{app:converge}, the factorized Adareasoner has fast convergence with error rate bounded by $O\Bigl(\tfrac{1}{\alpha N} + \alpha\Bigr)$ \textcolor{red}{where $N$ is .... $\alpha$ is ....   When N.... the convergence is ..... Our empirical evaluation also .... This is a verification about the advantage of AdaReasoner in few-shot learning scenarios.  }.  Additionally,  the performance gap between the factorized policy and the non-factorized one is   bounded by    $\epsilon_{\mathrm{rep}} \;+\; 2\,\epsilon(N,\delta)$, \textcolor{red}{where $\epsilon_{\mathrm{rep}}$ is .... $\epsilon(N,\delta)$ is ....  }.

\textbf{Exploration Strategy.}
By formulating the configuration selection for each question as a multi-armed bandit (MAB) problem, we aim to design an effective exploration strategy under the few-shot training setting. However, since the reward is derived indirectly from LLM outputs and the process is not an online learning scenario, standard MAB strategies such as Upper Confidence Bound (UCB)~\citep{sutton2018reinforcement} become impractical. Moreover, evaluating all configurations for each context $q$ is computationally infeasible, especially given the noisy and implicit reward landscape induced by LLM responses. Therefore, it is crucial to explore broadly across the configuration space while still prioritizing high-reward actions, and
%As in solving a multi-armed bandit problem, our task requires balancing exploration and exploitation as the agent learns. Rather than exhaustively trying all possible configuration settings for each question (i.e., testing all possible values of the arms at one state), it is important to ensure broad coverage of the configuration space while avoiding premature convergence to suboptimal choices. 
Boltzmann exploration offers an effective solution~\citep{pan2019reinforcement}, as it allows the agent to probabilistically select actions based on their estimated rewards. Specifically, for each action  ($a_t$, $a_s$ or $a_p$), we estimate the selection probability for its all  possible values (in $\mathcal{A}_t$, $\mathcal{A}_s$ or $\mathcal{A}_p$),
\begin{equation}
P(a_i) = \frac{\exp\bigl(Q(a_i)/\tau\bigr)}{\sum_{a_j \in \mathcal{A}} \exp\bigl(Q(a_j)/\tau\bigr)},
\label{eq:2}
\end{equation}
where \(Q(a_i)\) is the logit score in the output layer of one policy network $f_\theta$ for  action \(a_i\). The temperature $\tau$ in Boltzmann exploration controls the exploration-exploitation trade-off: higher $\tau$ promotes exploration, lower $\tau$ favors exploitation. We anneal $\tau$ exponentially as $\tau_t = \tau_0 \cdot \alpha^t, t \le T$, allowing the policy to gradually shift from broad exploration to reliable configuration selection and refined optimization~\citep{kirkpatrick1983optimization}. 

%The temperature parameter $\tau$ here (for Boltzmann exploration) controls the balance between exploration and exploitation: higher temperatures encourage more exploration, while lower temperatures promote exploitation of the best-performing configurations. We set $\tau$ to be annealed exponentially, i.e., $\tau_k = \tau_0 \cdot \alpha^k$ at the $k$-th iteration. In this way,  policy transitioning from broad exploration to selecting learned effective configurations, ultimately providing reliable policy estimates,  and subsequently refines the policy for precise optimization~\citep{kirkpatrick1983optimization}.

%\subsubsection{Reward signal} 
\textbf{Reward Signal.} 
%Because standard RL algorithms inherently require both dense~\cite{eschmann2021reward} and high-variance~\citep{razin2025makes} reward signals—which pre-trained language models can supply~\citep{ouyang2022training}—
Similar to previous work~\citep{kwon2023reward, ma2023eureka, rocamonde2023vision} using pre-trained language model as reward on light-weight RL model, we employ a language judgement model (ours is DeBERTa-based) as reward model~\citep{openassistant_reward-model-deberta-v3-large-v2_2023} to provide feedback on the selected actions. Concretely, for the resulting  generated answer $\Phi(q|a)$, it is presented to the reward model alongside the original question $q$ and reference answer $R$ in the form of the prompt ``For $q$, the generated answer $\Phi(q|a)$ matches the ground truth $R$ and is correct''.  The reward is computed from the model's logits, providing a scalar score that enables fine-grained, differentiable supervision over diverse reasoning trajectories.

With the reward $r$, the AdaReasoner is optimized using the gradient descent (REINFORCE) algorithm~\citep{silver2014deterministic}, where the overall policy $\Pi_{\Theta}(a \mid q, \Phi)$ is factorized into three heads with a shared feature extractor $f_{\theta_c}$, and $\Theta = \{\theta_c, \theta_p, \theta_t, \theta_s\}$ denotes the complete set of trainable parameters. For each head $j \in \{p, t, s\}$, we define the head-specific loss as $\mathcal{L}_j = -r \log \Pi_{\theta_j}(a \mid q, \Phi)$, resulting in a total loss $\mathcal{L} = \sum_{j \in \{p,t,s\}} \mathcal{L}_j$. The gradients are then computed via the chain rule, where the shared-layer gradient is aggregated as $\nabla_{\theta_c}\mathcal{L} = \sum_{j\in\{p,t,s\}} \nabla_{\theta_c}\mathcal{L}_j$, and used for updating 
\begin{equation}
\theta_c \gets \theta_c - \eta\,\nabla_{\theta_c}\mathcal{L}. 
\end{equation}
Each head is updated   as 
\begin{equation}
\theta_j \gets \theta_j - \eta\,\nabla_{\theta_j}\mathcal{L}_j \quad \forall \quad j\in\{p,t,s\}.   
\label{eq:66}
\end{equation}
%the full gradient is written as \autoref{eq44}.
%\begin{equation}
%\nabla_{\Theta}L = \bigl(\nabla_{\theta_c}L,\;\nabla_{\theta_p}L_p,\;\nabla_{\theta_t}L_t,\;\nabla_{\theta_s}L_s\bigr)
%\label{eq:44}
%\end{equation}
%The parameter update follows standard gradient descent as \autoref{eq:55}.
%\begin{equation}
%\Theta \gets \Theta - \eta\,\nabla_{\Theta}L
%\label{eq:55}
%\end{equation}
%which decomposes into parallel updates for each head and the shared extractor like \autoref{eq:66}. 
%\begin{equation}
%\theta_j \gets \theta_j - \eta\,\nabla_{\theta_j}L_j \quad(j\in\{p,t,s\}), 
%\qquad 
%\theta_c \gets \theta_c - \eta\,\nabla_{\theta_c}L
%\label{eq:66}
%\end{equation}
This training scheme ensures that each sub-policy is guided by its own loss while the shared feature extractor $f_{\theta_c}$ is jointly optimized by all heads, thereby promoting coherence across the three action dimensions and preventing convergence to conflicting optima, in line with findings from multi-task learning~\citep{ruder2017overview}. Further training details are described in Algorithm~\ref{alg:adareasoner_per_question}.

\section{Experiments}

%\subsection{Dataset}

%\begin{table}[h]
%    \centering
%    \renewcommand{\arraystretch}{1.2}
%    \setlength{\tabcolsep}{6pt}
%    \begin{tabular}{llccccc}
%        \toprule
%        \multirow{2}{*}{\textbf{Model}} & \multirow{2}{*}{\textbf{CoT Method}} & \multicolumn{5}{c}{\textbf{Dataset}} \\
%        \cmidrule(lr){3-7}
%        & & \textbf{Metaphor} & \textbf{Jokes} & \textbf{HARDMATH} & \textbf{LogiQA} & \textbf{ConditionalQA} \\
%        \midrule
%        \multirow{4}{*}{GPT-4o} 
%        & Standard CoT & 65.2 & 72.3 & 10.9 & 76.6 & 70.6 \\
%        & Think Short & 66.5 & 69.7 & 12.0 & 67.1 & 61.6 \\
%        & Tree of Thoughts & 64.7  & 74.4 & 11.5 & 80.0 & 76.2 \\
%        & Best-of-N & 54.9& 77.8& 20.0 & 80.8& 76.5\\
%        & Auto-CoT & 65.5 & 78.3 & 12.5 & 81.1 & 74.2\\
%        & AdaReasoner & 75.0 & 80.4 & 13.3 & 85.0 & 75.0\\
%        \midrule
%        \bottomrule
%    \end{tabular}
%    \caption{Performance of different LLM models using various CoT methods across multiple datasets (accuracy in \%).}
%    \label{tab:llm_cot_results}
%\end{table}

\subsection{Experimental Setting} \vspace{-5pt}
\textbf{Dataset.} To evaluate the performance of AdaReasoner, 
%diverse reasoning strategies, 
we selected datasets that engage distinct cognitive processes, ranging from logical and mathematical to figurative and generative reasoning.
\vspace{-6pt}
\begin{itemize}[leftmargin=*, itemsep=0pt, parsep=0pt]
    \item \textbf{MMLU:} This is a collection of  data examples that are in the \textit{Math} category from the Massive Multitask Language Understanding (MMLU) benchmark~\citep{hendrycks2020mmlu}, focusing on numerical reasoning, symbolic manipulation, and procedural problem solving.

    \item \textbf{Metaphor} \citep{tong2024metaphor}: This  dataset   focuses on evaluating whether a highlighted word in context is used metaphorically in the context.

\item \textbf{TruthfulQA}~\citep{lin2021truthfulqa}: This dataset tests LLM trustworthy generation by posing questions with common misconceptions or false premises.

\item \textbf{LogiQA}~\citep{liu2020logiqa}: This dataset is  designed for multi-step logical reasoning based on Chinese civil service exam questions.

\end{itemize}
\vspace{-6pt}
%\begin{wrapfigure}{r}{0.65\textwidth}
%    \vspace{-10pt}
%    \centering
%    \includegraphics[width=\linewidth]{dataset.pdf}
%    \caption{Visualization of lexical variation (left) and question length distributions (right) across the datasets.}
%    \label{fig:dataset}
%    \vspace{-10pt}
%\end{wrapfigure}
Each dataset contributes 250 samples, randomly sampled from the full dataset. The combined dataset is then divided into a training set of 100 samples and a test set of 900 samples forming thus a few-shot setting. Examples of the four datasets are  displayed at \autoref{tab:dataset-examples} and distribution of each dataset is shown at \autoref{fig:boxplotfinal}.

\textbf{Baselines.} We compare AdaReasoner with several baselines that adopt different strategies to improve LLM reasoning:
\vspace{-6pt}
\begin{itemize}[leftmargin=*, itemsep=0pt, parsep=0pt]
    \item \textbf{CoT (Chain-of-Thought)} \citep{wei2022chain}: Prompts the model to think step-by-step for reasoning.
    %\item \textbf{Few-Shot CoT}: Incorporates few-shots QA examples to enhance CoT reasoning.
    \item \textbf{Think Short}: Prompts the model for brief, quick responses with prompt at \autoref{fig:bin_template}.
    \item \textbf{ToT (Tree-of-Thought)} \citep{yao2023tree}: Structures reasoning path as a tree, exploring and selecting among multiple paths.
    \item \textbf{Best-of-N}~\citep{ji2023towards}: Produces $N$ candidate chains, selects the best based on a predefined scoring metric.
    \item \textbf{Auto-CoT} \citep{zhang2022automatic}: For each query, retrieve semantically nearest exemplars from a few-shot pool (via embedding clustering), generate CoT rationales, and concatenate the question–rationale–answer triplets as the in-context prompt; other settings follow the original.
    \item \textbf{In-context CoT (ICL)}~\citep{brown2020language}: Leverages in-context CoT generation by presenting examples of few-shot train set directly within the prompt.
\end{itemize}
\vspace{-6pt}

\textbf{Evaluation and other details.} To evaluate the alignment between LLM-generated responses and the ground truth, we adopt the ``LLM-as-a-Judge" paradigm \citep{zheng2023judging}, utilizing GPT-4o to assess both the semantic equivalence of answers and the quality of their explanations through dedicated judgment prompts, as illustrated in \autoref{fig:binary_template}. In each  evaluation, the \texttt{top\_p} parameter is set to 0.1 and the \texttt{max\_token} parameter is set to 5,000, with no system prompt utilized. We random select 100 out of 1,000 samples as few-shot examples for AdaReasoner and ICL. ToT uses a beam width of 2 and a max length of 3. Baselines follow default settings with in-context examples from the same dataset and type. AdaReasoner uses a fixed learning rate of 0.01, BERT embeddings (768-d) for the input question, and a 3-layer MLP for each policy head. % with exploration parameter $\alpha = 0.95$.

\begin{table}[t!]
    \centering
    \small
    \caption{Performance of various reasoning methods across multiple datasets for different LLM models (accuracy in \%). The highest score for each dataset and the average in each model group is highlighted in \textbf{bold} and \underline{underlined}.}
    \renewcommand{\arraystretch}{1}
    \setlength{\tabcolsep}{6pt}
    %\rowcolors{3}{white}{gray!10}
    \resizebox{\linewidth}{!}{%
    \begin{tabular}{l l c c c c c}
\toprule[1pt]
\multirow{2}{*}{\textbf{Model}} 
  & \multirow{2}{*}{\textbf{Reason Method}} 
  & \multicolumn{4}{c}{\textbf{Dataset (\%)}} 
  & \multirow{2}{*}{\textbf{Average}} \\
\cmidrule(lr){3-6}
  &  & \textbf{Metaphor} & \textbf{TruthfulQA} & \textbf{MMLU} & \textbf{LogiQA} & \\
\midrule

%%%%%% GPT-4o %%%%%%
\rowcolor{gray!8}
\cellcolor{white}{\multirow{6}{*}[-1.5ex]{\centering\textbf{GPT-4o}}}
  & CoT            & 50.40 & 78.40 & 76.04 & 70.00 & 68.71 \\
  & Think Short    & 61.00 & 64.81 & 68.52 & 70.81 & 66.28 \\
\rowcolor{gray!8}
\cellcolor{white}{}
  & ToT            & 48.25 & 74.29 & 86.11 & 73.90 & 70.91 \\
  & Best-of-N      & 52.60 & 79.41 & 83.41 & 72.37 & 71.95 \\
\rowcolor{gray!8}
\cellcolor{white}{}
  & Auto-CoT       & 62.33 & \underline{\textbf{83.09}} & 72.15 & 71.71 & 72.32 \\
  & In-context CoT & 53.98 & 77.04 & 83.63 & 80.04 & 74.42 \\
\rowcolor{gray!8}
\cellcolor{white}{}
  & AdaReasoner    & \underline{\textbf{71.56}} & 81.30 & \underline{\textbf{86.49}} & \underline{\textbf{82.31}} & \underline{\textbf{80.42}} \\
\midrule

%%%%%% Llama-3.3-70B-Ins. %%%%%%
\rowcolor{green!4}
\cellcolor{white}{\multirow{6}{*}[-1.5ex]{\centering\textbf{Llama-3.3-70B-Ins.}}}
  & CoT            & 51.56 & 75.77 & 83.33 & 75.56 & 71.56 \\
  & Think Short    & 59.56 & 75.77 & 81.61 & 73.78 & 72.68 \\
\rowcolor{green!4}
\cellcolor{white}{}
  & ToT            & 60.89 & 75.33 & 86.24 & 83.56 & 76.51 \\
  & Best-of-N      & 52.89 & 77.09 & \underline{\textbf{89.69}} & 76.00 & 73.92 \\
\rowcolor{green!4}
\cellcolor{white}{}
  & Auto-CoT       & 45.33 & 78.85 & 81.82 & 76.00 & 70.50 \\
  & In-context CoT & 52.71 & 82.45 & 84.57 & 75.59 & 73.60 \\
\rowcolor{green!4}
\cellcolor{white}{}
  & AdaReasoner    & \underline{\textbf{66.11}} & \underline{\textbf{83.09}} & 87.77 & \underline{\textbf{85.00}} & \underline{\textbf{80.74}} \\
\midrule

%%%%%% Qwen-2.5-72B-Ins. %%%%%%
\rowcolor{brown!7}
\cellcolor{white}{\multirow{6}{*}[-1.5ex]{\centering\textbf{Qwen-2.5-72B-Ins.}}}
  & CoT            & 60.18 & 79.36 & 73.89 & 78.26 & 72.92 \\
  & Think Short    & 71.24 & 80.28 & 64.16 & 75.22 & 72.73 \\
\rowcolor{brown!7}
\cellcolor{white}{}
  & ToT            & 62.26 & 77.50 & 66.57 & 79.51 & 71.46 \\
  & Best-of-N      & 59.73 & 78.44 & 76.11 & 78.26 & 73.14 \\
\rowcolor{brown!7}
\cellcolor{white}{}
  & Auto-CoT       & 65.93 & 83.49 & 76.11 & 79.13 & 76.17 \\
  & In-context CoT & \underline{\textbf{73.39}} & 78.94 & 71.93 & 74.83 & 74.77 \\
\rowcolor{brown!7}
\cellcolor{white}{}
  & AdaReasoner    & 65.19 & \underline{\textbf{83.82}} & \underline{\textbf{80.14}} & \underline{\textbf{80.79}} & \underline{\textbf{77.49}} \\
\midrule

%%%%%% Claude-3.5-sonnet %%%%%%
\rowcolor{gray!8}
\cellcolor{white}{\multirow{6}{*}[-1.5ex]{\centering\textbf{Claude-3.5-sonnet}}}
  & CoT            & 62.13 & 86.13 & 85.00 & 80.43 & 78.42 \\
  & Think Short    & \underline{\textbf{67.71}} & 83.43 & 78.95 & 77.95 & 77.01 \\
\rowcolor{gray!8}
\cellcolor{white}{}
  & ToT            & 59.45 & 85.12 & 86.43 & 81.98 & 78.25 \\
  & Best-of-N      & 41.41 & 83.43 & 81.87 & 78.95 & 71.42 \\
\rowcolor{gray!8}
\cellcolor{white}{}
  & Auto-CoT       & 65.04 & 84.86 & 88.50 & 78.70 & 79.28 \\
  & In-context CoT & 55.81 & \underline{\textbf{88.60}} & 79.23 & 79.53 & 75.79 \\
\rowcolor{gray!8}
\cellcolor{white}{}
  & AdaReasoner    & 65.77 & 86.17 & \underline{\textbf{89.21}} & \underline{\textbf{84.55}} & \underline{\textbf{81.43}} \\
\midrule

%%%%%% Deepseek-R1 %%%%%%
\rowcolor{green!4}
\cellcolor{white}{\multirow{6}{*}[-1.5ex]{\centering\textbf{Deepseek-R1}}}
  & CoT            & 54.35 & 83.34 & 96.13 & 81.82 & 78.91 \\
  & Think Short    & 67.71 & 80.00 & 95.55 & 77.71 & 80.24 \\
\rowcolor{green!4}
\cellcolor{white}{}
  & ToT            & 63.33 & 86.16 & \underline{\textbf{98.70}} & 83.22 & 82.85 \\
  & Best-of-N      & 54.55 & 85.51 & 94.37 & 87.01 & 80.36 \\
\rowcolor{green!4}
\cellcolor{white}{}
  & Auto-CoT       & 61.04 & 82.61 & 97.70 & 80.52 & 80.47 \\
  & In-context CoT & 50.06 & 84.21 & 96.15 & 84.25 & 78.67 \\
\rowcolor{green!4}
\cellcolor{white}{}
  & AdaReasoner    & \underline{\textbf{72.00}} & \underline{\textbf{88.17}} & 96.33 & \underline{\textbf{88.60}} & \underline{\textbf{86.28}} \\
\midrule

%%%%%% GPT-o3-mini %%%%%%
\rowcolor{brown!7}
\cellcolor{white}{\multirow{6}{*}[-1.5ex]{\centering\textbf{GPT-o3-mini}}}
  & CoT            & 45.10 & 84.00 & 95.71 & 83.87 & 77.17 \\
  & Think Short    & 57.14 & 80.00 & 93.21 & 67.74 & 74.52 \\
\rowcolor{brown!7}
\cellcolor{white}{}
  & ToT            & 53.85 & 84.91 & 98.18 & 80.00 & 79.24 \\
  & Best-of-N      & 56.99 & 82.10 & 93.55 & 84.22 & 79.22 \\
\rowcolor{brown!7}
\cellcolor{white}{}
  & Auto-CoT       & 51.00 & \underline{\textbf{86.79}} & \underline{\textbf{97.78}} & 76.14 & 77.92 \\
  & In-context CoT & 53.00 & 82.25 & 95.56 & 77.19 & 77.00 \\
\rowcolor{brown!7}
\cellcolor{white}{}
  & AdaReasoner    & \underline{\textbf{67.29}} & 86.45 & 96.13 & \underline{\textbf{87.67}} & \underline{\textbf{84.39}} \\
\bottomrule[1pt]
\end{tabular}
    }
    \label{tab:llm_cot_results_rephrased}
    \vspace{-10pt}
\end{table}

\subsection{Main Results}
\vspace{-5pt}
%\subsection{Dataset}

%\textbf{Original Full Model performance on GPT-4o is about to be changed after slight modification of prompt.}

%\paragraph{Performance Table}

\paragraph{Performance of reasoning methods across datasets.} 
\autoref{tab:llm_cot_results_rephrased} summarizes the accuracy of different reasoning strategies across multiple datasets for each backbone LLM. Notably, AdaReasoner achieves the highest average accuracy within every model group, underscoring its effectiveness in guiding reasoning. For instance, AdaReasoner achieves an average of 80.42\% on GPT-4o, surpassing Auto-CoT and other baselines,  and similarly 81.4\% on Claude-3.5-sonnet, confirming its stability across evaluation settings. 
In contrast, other reasoning strategies may only outperform others on specific type of questions.  ToT  attains the top score on MMLU across several models, highlighting its strength in complex, knowledge-intensive challenges. Meanwhile, Auto-CoT yields the highest accuracy on TruthfulQA for both GPT-4o and Qwen-2.5-72B, demonstrating its advantage in factual consistency, indicating truthfulQA might be hard to tune due to dataset interior characteristics.

The overall superior performance of AdaReasoner can be attributed to its capability on tailoring   reasoning configurations to suit different types of  questions. As detailed in \cref{subsec:promptplots}, we analyze the dataset-specific distributions of $a_p$, $a_s$, and $a_t$. The boxplot in \autoref{fig:ablation_steps_temp} shows the distribution of $a_s$ and $a_t$ across correct and incorrect cases.  \autoref{tab:avg_action_stats} reports the average and standard deviation of $a_s$ and $a_t$. The heatmap in \autoref{fig:strategy-heatmap-freq-comparison} illustrates performance differences between the most and least frequent $a_p$ options. \autoref{tab:toptable} presents the Top-3 reasoning instructions $a_p$ identified by AdaReasoner for each dataset. 
From these results, we can observe that AdaReasoner’s action selection showing clear dataset-specific distinctions, especially regarding the reasoning instructions $a_p$.

In addition, to further demonstrate the reliability of LLM-as-Judge method used, we provide human annotated result in Appendix ~\ref{sec:LLMJUDGE}.

%Furthermore, \autoref{fig:serious_dist} shows the density distribution of $a_p$ with respect to the probed ``serious'' instruction axis, —together indicating that AdaReasoner’s adaptive strategies align well with intuitive expectations.

%and we report each dataset’s mean and standard deviation for reasoning steps ($a_s$) and generation temperature ($a_t$). Probing $a_p$ of whether being “serious reasoning” via the DeBERTa‐based reward model~\citep{openassistant_reward-model-deberta-v3-large-v2_2023} and smoothing its density, we visualize the per‐dataset distribution of $a_p$. These marked distributional differences across datasets both underscore AdaReasoner’s adaptive strategy selection and align closely with our intuitive expectations.

%As discussed in~\cref{subsec:promptplots}, the distribution of hyperparameter configurations learned by AdaReasoner varies notably across datasets, reflecting task-specific prompting preferences which are highly aligned with intuitive understanding.

\begin{wrapfigure}{r}{0.45\textwidth}
    \vspace{-25pt}
    \centering
    \includegraphics[width=0.45\textwidth]{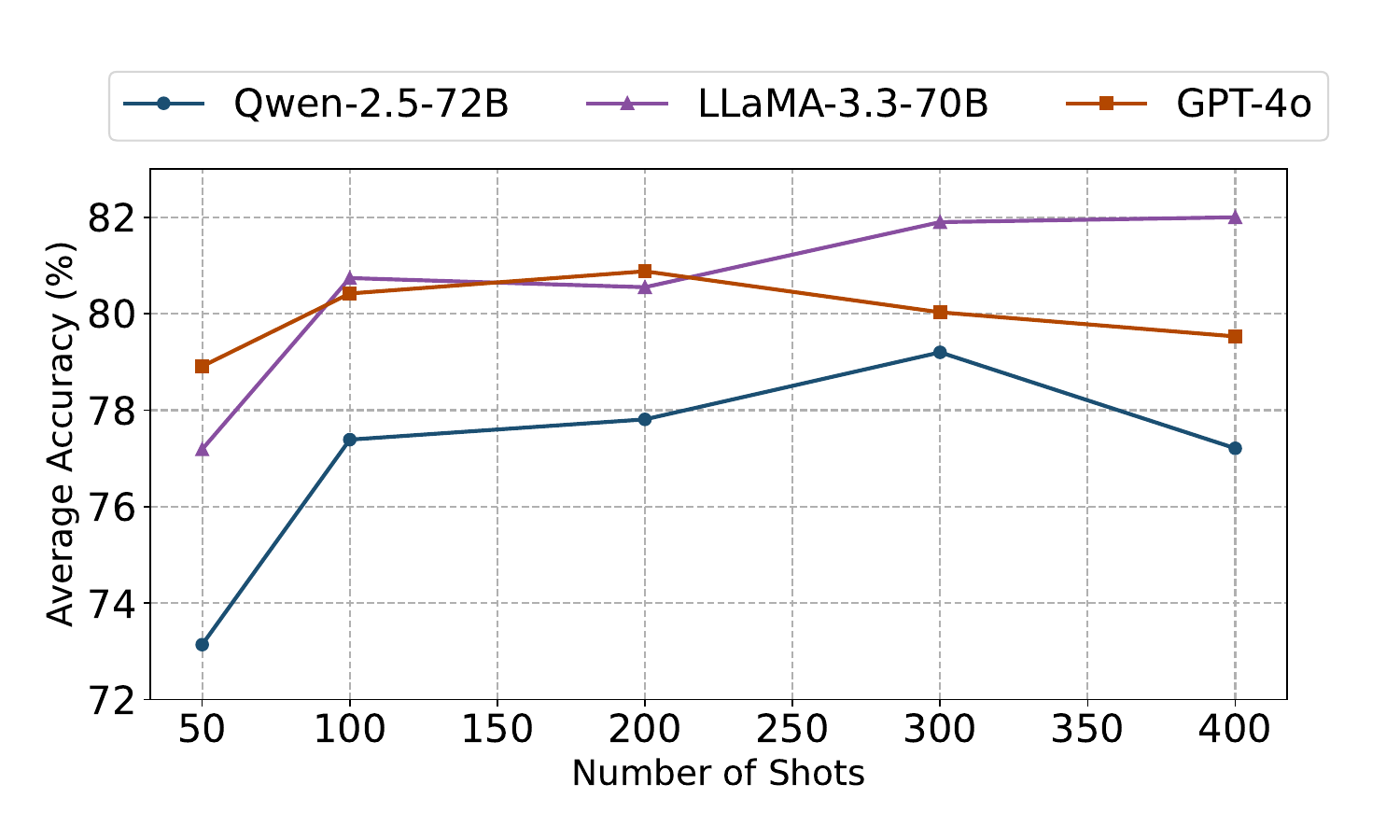}
    \vspace{-20pt}
    \caption{Few-shot training performance.} %by shots ranging from 50 to 400   on GPT-4o and Qwen-2.5-72B and LlaMA-3.3-70B models.}
    \label{fig:performance_by_shots}
    \vspace{-15pt}
\end{wrapfigure}

%Beyond these overall averages, per-dataset analysis reveals that specific methods excel on particular tasks. ToT consistently attains the top score on MMLU across several models, highlighting its strength in complex, knowledge-intensive challenges. Meanwhile, Auto-CoT yields the highest accuracy on TruthfulQA for both GPT-4o and Qwen-2.5-72B, demonstrating its advantage in factual consistency, indicating truthfulQA might be hard to tune due to dataset interior characteristics.

\paragraph{Few-shot Training.} 
\autoref{fig:performance_by_shots} shows that when AdaReasoner is trained in few-shot scenarios, its  performance exhibits marginal gains beyond 100 shots, universally for Qwen-2.5-72B, LLaMA-3.3-70B and GPT-4o. With 50–100 demonstrations suffice for the AdaReasoner to learn core reasoning patterns, validating the efficiency of the few-shot setting. Theoretical worst case regret convergence as in Appendix~\ref{app:reg} is $\mathcal{O}\sqrt{\mathcal{|A|}\text{ln}\mathcal{|A|}}$, AdaReasoner converges far faster in practice. This might be due to a shared encoder enforcing $\beta$-smoothness and near-convexity~\cite{du2019gradient}, and a pretrained reward model providing high-fidelity, high-SNR feedback (``warm start'') that accelerates few-shot policy updates.

%, with error bounded by $\frac{2\bigl(J(\Theta^*)-J(\Theta_0)\bigr)}{\eta K} + L\;\eta\;\sigma^2$.

\vspace{-0.05in}
\subsection{Ablation Studies} \label{sec:ablation} \vspace{-0.05in}
We modify the components in AdaReasoner to conduct an ablation study, validating the effectiveness of each design choice. Among the results presented in \autoref{tab:ablation_final},   \textbf{AdaReasoner ($a$ only)} refers to a setup where only the adaptation of hyperparameter  $a$ is allowed. In addition to $a_s$, $a_t$ and $a_p$, we also adapt the random seed in the same way to demonstrate that it is not an ideal choice (and thus excluded). 
Adapting only the reasoning instruction $a_p$ results in the smallest performance drop, highlighting the importance of this action. It also emphasizes the necessity of considering simultaneously $a_s$ and $a_t$ in the adaptation process.

To evaluate the effectiveness of Boltzmann exploration, we replace it by applying Thompson sampling~\citep{russo2018tutorial} to all actions (\textbf{w/ Bandit Adapter}), which leads to a performance drop to 75.89\%. 
To evaluate the effectiveness of the reward model, we added Gaussian noise ($\sigma=0.01$) to reward signal (\textbf{w/ Perturbed Reward}), and rescaled reward value from the interval [0   1] to the interval [-0.5 0.5] (\textbf{w/ [-0.5 0.5] Reward}). The results show that Adareasoner is robust to reward noise yet sophisticated in reward rescaling.

Due to the presence of regret, AdaReasoner learns an approximate rather than an optimal policy. To assess this, we analyze perturbed variants (\textbf{Close and Distant Perturb}) by selecting neighboring actions in embedding space by similarity. We also evaluate an \textbf{Ensemble} setting that aggregates independently trained policy heads without shared layers, further validating AdaReasoner’s design.

%\textcolor{red}{a short discussion about "Random action" it is bad for all, except MMLU!} 
%The final experiment investigates the transferability of AdaReasoner across models—specifically, whether a policy trained on one model can be directly applied to another during inference. As shown in the \textbf{w/ Qwen Adapter} row, AdaReasoner lacks cross-model generalizability: inserting an adapter trained on Qwen into GPT-4o drops accuracy to 72.31\%. Similarly, \textbf{Random Action} results in a notable performance decline, highlighting the necessity of policy adaptation. 
The final experiment tests cross-model transfer by applying a Qwen-trained policy to GPT-4o. As shown in the \textbf{w/ Qwen Adapter} row, average performance drops to 72.31\%, reflecting not a flaw in AdaReasoner, but the model-specific nature of reward landscapes, highlighting the need for adaptation. \textbf{Random Action} also underperforms, reinforcing the value of learned strategies. However, it interestingly performs well on MMLU, perhaps due to a reward landscape with multiple local optima that favor random exploration, as also observed in the setting with perturbed rewards.  

\begin{table}[h] \vspace{-0.1in}
\centering \small
\caption{Ablation study results (accuracy in \%) for AdaReasoner when promoting GPT-4o. The best result in each column is highlighted in \textbf{bold} and \underline{underlined}.}
\rowcolors{2}{brown!10}{white}
\begin{tabular}{lccccc}
\toprule[1pt]
\textbf{Ablation} & \textbf{Metaphor} & \textbf{TruthfulQA} & \textbf{MMLU} & \textbf{LogiQA} & \textbf{Average} \\
\midrule
Random Action & 55.92 & 76.15 &  80.32 & 76.81 & 72.30 \\
%No RL                & \textbf{\underline{69.61}} & 78.72 & 77.78 & 75.38 & 75.87 \\
%No Self-Reflection   & 65.04 & 77.75 & 82.33 & 73.41 & 74.63 \\
%LLM Selector         & 55.60 & 78.72 & 83.44 & 77.88 & 73.91 \\
AdaReasoner ($a_t$)    & 62.91 & 80.00 & 77.71 & 75.67 & 74.07 \\
AdaReasoner ($a_s$)     & 68.11 & 74.29 & 82.11& 74.44& 74.74 \\
AdaReasoner ($a_p$)   & 70.66 & 78.31 & 84.50 & 81.01 & 78.62\\
AdaReasoner (Random Seed) & 53.17 & 70.55 & 79.13& 73.90 & 69.19\\
w/ Bandit Adapter       & 68.30 & 76.11 & 80.00 & 79.13 & 75.89\\
w/ Perturbed Reward          & 70.83 & 79.26 & 85.07 & 77.89 & 78.26 \\
w/ [-0.5, 0.5] Reward & 56.66 & 76.15 &  79.04& 77.63 & 72.37 \\
w/ Qwen Adapter & 65.76 & 73.80 & 69.69 & 80.00 & 72.31\\
Adareasoner (Close-perturb)     & 66.05 & 79.39 & 85.18 & 80.03 & 77.66\\
Adareasoner (Distant-perturb)  & 57.69 & 71.77 & 81.42 & 74.96 & 71.46\\
Adareasoner (Emsemble)         & 65.73 & 79.54 & 84.71 & 80.04 & 77.50\\
\midrule
\rowcolor{green!3} \textbf{AdaReasoner} & \underline{\textbf{71.56}} & \underline{\textbf{81.30}} & \underline{\textbf{86.49}} & \underline{\textbf{82.31}} & \underline{\textbf{80.42}} \\
\bottomrule[1pt]
\end{tabular}
\label{tab:ablation_final}
\vspace{-10pt}
\end{table}

%\section{Findings}

\subsection{OOD Generalization of AdaReasoner}

%For the three similar datasets, 

%150 samples were selected to support the formulation and estimation of the hyperparameter instruction policy, as detailed in the Ablation Studies.

\begin{wraptable}{r}{0.5\textwidth}
\centering
\small \vspace{-0.3in}
\rowcolors{2}{brown!10}{white}
\caption{Qwen-2.5-72B's performance (Accuracy \%) with different reasoning methods on three OOD datasets.}
\scalebox{0.89}{
\begin{tabular}{lcccc}
\toprule[1pt]
\textbf{Model}  & \textbf{BRIGHTER} & \textbf{StepGame} & \textbf{CRoW} \\
\midrule
Think Short & 52.08 & 71.25 & 90.46 \\
CoT           & 51.19 & 73.73 & 93.97 \\
Auto-CoT      & 55.17 & 68.64 & 90.52\\
ToT      & 51.40 & 76.32 & 80.18 \\
Best-of-N      & 49.14 & 73.73 & 93.10 \\
In-context CoT & 53.17 & 77.15 & 90.00 \\
\midrule
\rowcolor{green!3} \textbf{AdaReasoner}       & \underline{\textbf{55.36}} & \underline{\textbf{78.00}} & \underline{\textbf{95.56}} \\
\bottomrule[1pt]
\end{tabular}}
\label{tab:comparison_real}
\vspace{-0pt}
\end{wraptable}

%The AdaReasoner framework naturally requires a few-shot training dataset for each target evaluation scope. This section highlights AdaReasoner's scalability: once an adapter is trained on one dataset, it can be effectively applied to similar ones. 
%We next evaluate the generalization of AdaReasoner trained with few-shot samples. 
\autoref{tab:comparison_real} shows if  the AdaReasoner trained on the above-mentioned four datasets can be effectively applied on other out of domain (OOD) applications, such as multilingual emotion analysis BRIGHTER dataset~\citep{muhammad2025brighter}, spatial planning   in the StepGame dataset~\citep{shi2022stepgame}, and commonsense reasoning in the CRoW dataset~\citep{ismayilzada2023crow}. 
On the 150 QA pairs randomly sampled from each of these datasets that AdaReasoner has never encountered before,  we can observe a stable superior performance of Adareasoner over other reasoning methods. 
%metaphor data can be successfully used on the multilingual emotion analysis BRIGHTER dataset~\citep{muhammad2025brighter}; similarly, an adapter fine-tuned on MMLU can enhance spatial planning capabilities in the StepGame dataset~\citep{shi2022stepgame}, and one trained on LogiQA proves equally effective for commonsense reasoning in the CRoW dataset~\citep{ismayilzada2023crow}. With detail referred in \autoref{tab:comparison_real}, we sampled 150 QA pairs of each dataset that AdaReasoner has never encountered before. In general, we can observe a stable superior performance of Adareasoner.

\subsection{AdaReasoner on Knowledge Intensive Datasets}
We next challenge our method on knowledge-intensive datasets, such as GPQA \citep{rein2024gpqa}, MMLUChem \citep{hendrycks2020mmlu}, and MedExQA \citep{kim2024medexqa}, which require general domain knowledge or domain-specific knowledge in areas like chemistry, medicine. 
%Prior work typically resorts to progressive prompting--having the model outline domain concepts step by step and then fill in details--which incurs substantial cost in both examples and compute.
We randomly select 100 samples from each of these three datasets for training, and 500 samples for testing.
As shown in Figure~\ref{fig:knowledge_intensive}, %conventional reasoning approaches such as CoT, ToT, and standard in-context learning exhibit limited adaptation when applied to knowledge-intensive benchmarks. In contrast, 
AdaReasoner shows a modest yet consistent capacity to adjust to questions requiring intensive knowledge, outperforming conventional reasoning approaches such as CoT and ToT. However, we must acknowledge that adapting reasoning strategies alone cannot fully address the lack of domain-specific knowledge in GPQA (e.g., general facts, cultural references, history). A case-by-case analysis in~\autoref{tab:mmluchem_top5_prompts} reveals that the adapter often selects self-audit, cross-reasoning, or creative prompt variants for such examples. Combining \autoref{tab:mmluchem_top5_prompts} with \autoref{tab:toptable}, the most frequently selected $a_p$ values—reflective self-questioning for logic-intensive tasks and creative assumption-challenging for Knowledge Intensive and Metaphor—suggest that cognitive configuration adaptation is a promising direction for further exploration, and this is just one of many intriguing patterns uncovered.

%by AdaReasoner reveal several interesting patterns, suggesting that cognitive configuration adaptation is a promising direction for further exploration.

\begin{figure}[h]
    \centering
    \vspace{-5pt}
    \includegraphics[width=0.8\linewidth]{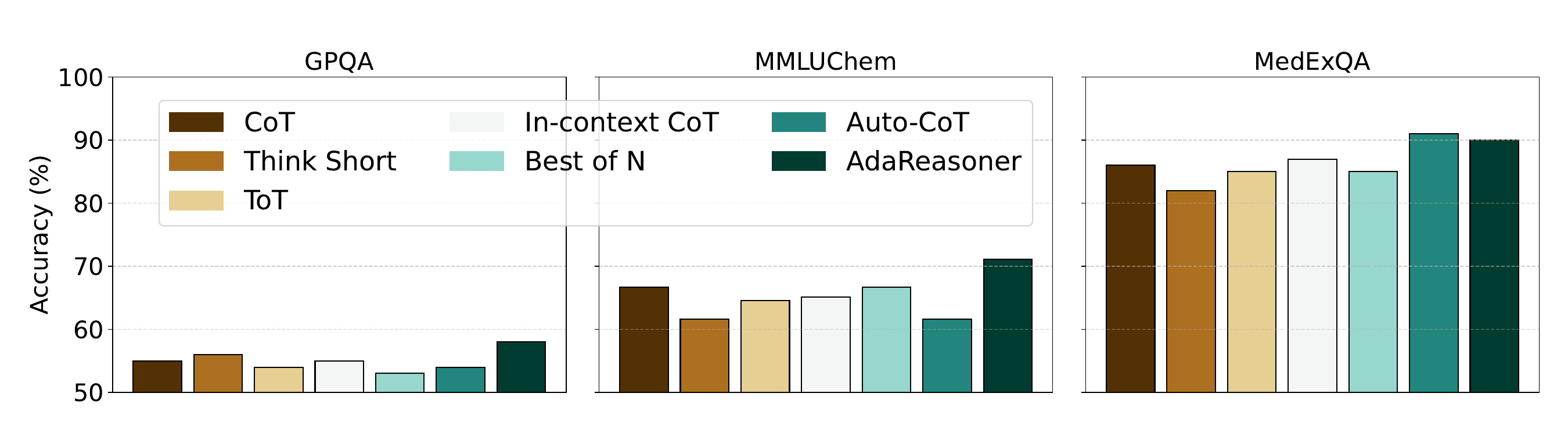}
    \vspace{-6pt}
    \caption{Performance of different reasoning methods on knowledge intensive datasets (accuracy in \%) by Llama-3.3-70B-Instruct.}
    \label{fig:knowledge_intensive}
    \vspace{-5pt}
\end{figure}

%We evaluated baseline LLM reasoning methods along with AdaReasoner on GPQA~\citep{rein2024gpqa} and MMLUChem~\citep{} using Llama-3.3-70B with 10\% sampled training set, concentrating on knowledge intensive questions, which are challenging as requiring substantial knowledge to answer. As shown in \autoref{fig:limitation}, prompt optimization yields only marginal improvements. This may be because questions requiring factual recall do not gain as much from enhanced inferential reasoning--prompt optimization is designed to streamline logical deduction, not to improve the retrieval of stored information. Consequently, the evaluation and potential applications of CoT are best suited to domains where common knowledge predominates.

%\begin{table}[h]
%    \centering
%    \renewcommand{\arraystretch}{1.2}
%    \setlength{\tabcolsep}{6pt}
%    \begin{tabular}{lcccccc}
%        \toprule
%        \textbf{Dataset} & \textbf{CoT} & \textbf{Think Short} & \textbf{ToT} & \textbf{Best-of-N} & \textbf{Auto-CoT} & \textbf{AdaReasoner} \\
%        \midrule
%        \textbf{GPQA} & 52.00 & 56.00 & 54.00 & 53.00&54.00 & 53.00\\
%        \bottomrule
%    \end{tabular}
%    \caption{Performance of different reasoning methods on GPQA (accuracy in \%) by Llama-3.3. \yue{I suggest you should use a barplot to show these results.}}
%    \label{tab:repliqa_results}
%\end{table}

\section{Conclusion and Future Work}
   \vspace{-5pt}
We presented AdaReasoner, an  LLM-agnostic plugin designed to identify question-tailored configuration for 
%adaptive LLM reasoning configuration framework that optimizes 
selecting reasoning instructions, setting generation temperature modulation, and the number of reasoning steps. Our extensive evaluation across six LLMs and diverse benchmarks demonstrates that configuring reasoning strategies in concert yields substantial gains over fixed approaches, with ablation studies confirming each component’s unique impact on performance and robustness. Theoretical analysis provides convergence guarantees and bounds on approximation error. Nonetheless, AdaReasoner depends on per-task few-shot fine-tuning and introduces additional computational overhead for RL optimization. %, which may limit its generalization to entirely novel tasks without further adaptation.
 
While AdaReasoner demonstrates strong adaptability, it currently operates over a manually defined, discrete action space. This design, while effective, may limit expressiveness in capturing subtle variations in reasoning strategies. Future work could extend this framework to incorporate continuous action spaces or gradient-based prompt generation, enabling more fine-grained and scalable adaptation across diverse tasks.

\clearpage

\bibliography{colm2024_conference}

\begin{thebibliography}{62}
\providecommand{\natexlab}[1]{#1}
\providecommand{\url}[1]{\texttt{#1}}
\expandafter\ifx\csname urlstyle\endcsname\relax
  \providecommand{\doi}[1]{doi: #1}\else
  \providecommand{\doi}{doi: \begingroup \urlstyle{rm}\Url}\fi

\bibitem[aut()]{autogen_enhanced_inference}
Enhanced inference | autogen 0.2.
\newblock \url{https://microsoft.github.io/autogen/0.2/docs/Use-Cases/enhanced_inference/}.
\newblock Accessed: 2025-05-13.

\bibitem[Almeida et~al.(2024)Almeida, Nunes, Engelmann, Wiegmann, and de~Ara{\'u}jo]{almeida2024exploring}
Guilherme~FCF Almeida, Jos{\'e}~Luiz Nunes, Neele Engelmann, Alex Wiegmann, and Marcelo de~Ara{\'u}jo.
\newblock Exploring the psychology of llms’ moral and legal reasoning.
\newblock \emph{Artificial Intelligence}, 333:\penalty0 104145, 2024.

\bibitem[Auer et~al.(2002)Auer, Cesa-Bianchi, Freund, and Schapire]{auer2002nonstochastic}
Peter Auer, Nicol{\`o} Cesa-Bianchi, Yoav Freund, and Robert~E. Schapire.
\newblock The nonstochastic multiarmed bandit problem.
\newblock \emph{SIAM Journal on Computing}, 32\penalty0 (1):\penalty0 48--77, 2002.

\bibitem[Besta et~al.(2024)Besta, Blach, Kubicek, Gerstenberger, Podstawski, Gianinazzi, Gajda, Lehmann, Niewiadomski, Nyczyk, et~al.]{besta2024graph}
Maciej Besta, Nils Blach, Ales Kubicek, Robert Gerstenberger, Michal Podstawski, Lukas Gianinazzi, Joanna Gajda, Tomasz Lehmann, Hubert Niewiadomski, Piotr Nyczyk, et~al.
\newblock Graph of thoughts: Solving elaborate problems with large language models.
\newblock In \emph{Proceedings of the AAAI Conference on Artificial Intelligence}, volume~38, pages 17682--17690, 2024.

\bibitem[Brown et~al.(2020)Brown, Mann, Ryder, Subbiah, Kaplan, Dhariwal, Neelakantan, Shyam, Sastry, Askell, et~al.]{brown2020language}
Tom Brown, Benjamin Mann, Nick Ryder, Melanie Subbiah, Jared~D Kaplan, Prafulla Dhariwal, Arvind Neelakantan, Pranav Shyam, Girish Sastry, Amanda Askell, et~al.
\newblock Language models are few-shot learners.
\newblock \emph{Advances in neural information processing systems}, 33:\penalty0 1877--1901, 2020.

\bibitem[Byrne et~al.(2019)Byrne, Evans, and Newstead]{byrne2019human}
Ruth~MJ Byrne, Jonathan St~BT Evans, and Stephen~E Newstead.
\newblock \emph{Human reasoning: The psychology of deduction}.
\newblock Psychology Press, 2019.

\bibitem[Chochlakis et~al.(2024)Chochlakis, Pandiyan, Lerman, and Narayanan]{chochlakis2024larger}
Georgios Chochlakis, Niyantha~Maruthu Pandiyan, Kristina Lerman, and Shrikanth Narayanan.
\newblock Larger language models don't care how you think: Why chain-of-thought prompting fails in subjective tasks.
\newblock \emph{arXiv preprint arXiv:2409.06173}, 2024.

\bibitem[Du et~al.(2019)Du, Lee, Li, Wang, and Zhai]{du2019gradient}
Simon Du, Jason Lee, Haochuan Li, Liwei Wang, and Xiyu Zhai.
\newblock Gradient descent finds global minima of deep neural networks.
\newblock In \emph{International conference on machine learning}, pages 1675--1685. PMLR, 2019.

\bibitem[Dutta et~al.(2024)Dutta, Singh, Chakrabarti, and Chakraborty]{dutta2024think}
Subhabrata Dutta, Joykirat Singh, Soumen Chakrabarti, and Tanmoy Chakraborty.
\newblock How to think step-by-step: A mechanistic understanding of chain-of-thought reasoning.
\newblock \emph{arXiv preprint arXiv:2402.18312}, 2024.

\bibitem[Gentner(1983)]{gentner1983structure}
Dedre Gentner.
\newblock Structure-mapping: A theoretical framework for analogy.
\newblock \emph{Cognitive science}, 7\penalty0 (2):\penalty0 155--170, 1983.

\bibitem[Guo et~al.(2025)Guo, Yang, Zhang, Song, Zhang, Xu, Zhu, Ma, Wang, Bi, et~al.]{guo2025deepseek}
Daya Guo, Dejian Yang, Haowei Zhang, Junxiao Song, Ruoyu Zhang, Runxin Xu, Qihao Zhu, Shirong Ma, Peiyi Wang, Xiao Bi, et~al.
\newblock Deepseek-r1: Incentivizing reasoning capability in llms via reinforcement learning.
\newblock \emph{arXiv preprint arXiv:2501.12948}, 2025.

\bibitem[Hendrycks et~al.(2020)Hendrycks, Burns, Basart, Zou, Mazeika, Song, and Steinhardt]{hendrycks2020mmlu}
Dan Hendrycks, Collin Burns, Andy Basart, Saurav Zou, Mantas Mazeika, Dawn Song, and Jacob Steinhardt.
\newblock Massive multitask language understanding (mmlu), 2020.
\newblock URL \url{https://huggingface.co/datasets/cais/mmlu}.
\newblock Accessed: 2025-02-28.

\bibitem[Huang et~al.(2023)Huang, Zhang, Cheng, Yang, and Yang]{huang2023fewer}
Xijie Huang, Li~Lyna Zhang, Kwang-Ting Cheng, Fan Yang, and Mao Yang.
\newblock Fewer is more: Boosting llm reasoning with reinforced context pruning.
\newblock \emph{arXiv preprint arXiv:2312.08901}, 2023.

\bibitem[Ismayilzada et~al.(2023)Ismayilzada, Paul, Montariol, Geva, and Bosselut]{ismayilzada2023crow}
Mete Ismayilzada, Debjit Paul, Syrielle Montariol, Mor Geva, and Antoine Bosselut.
\newblock Crow: Benchmarking commonsense reasoning in real-world tasks.
\newblock \emph{arXiv preprint arXiv:2310.15239}, 2023.

\bibitem[Jaech et~al.(2024)Jaech, Kalai, Lerer, Richardson, El-Kishky, Low, Helyar, Madry, Beutel, Carney, et~al.]{jaech2024openai}
Aaron Jaech, Adam Kalai, Adam Lerer, Adam Richardson, Ahmed El-Kishky, Aiden Low, Alec Helyar, Aleksander Madry, Alex Beutel, Alex Carney, et~al.
\newblock Openai o1 system card.
\newblock \emph{arXiv preprint arXiv:2412.16720}, 2024.

\bibitem[Ji et~al.(2023)Ji, Yu, Xu, Lee, Ishii, and Fung]{ji2023towards}
Ziwei Ji, Tiezheng Yu, Yan Xu, Nayeon Lee, Etsuko Ishii, and Pascale Fung.
\newblock Towards mitigating hallucination in large language models via self-reflection.
\newblock \emph{arXiv preprint arXiv:2310.06271}, 2023.

\bibitem[Jin et~al.(2024)Jin, Yu, Shu, Zhao, Hua, Meng, Zhang, and Du]{jin2024impact}
Mingyu Jin, Qinkai Yu, Dong Shu, Haiyan Zhao, Wenyue Hua, Yanda Meng, Yongfeng Zhang, and Mengnan Du.
\newblock The impact of reasoning step length on large language models.
\newblock \emph{arXiv preprint arXiv:2401.04925}, 2024.

\bibitem[Kim et~al.(2024)Kim, Wu, Abdulle, and Wu]{kim2024medexqa}
Yunsoo Kim, Jinge Wu, Yusuf Abdulle, and Honghan Wu.
\newblock Medexqa: Medical question answering benchmark with multiple explanations.
\newblock \emph{arXiv preprint arXiv:2406.06331}, 2024.

\bibitem[Kirkpatrick et~al.(1983)Kirkpatrick, Gelatt~Jr, and Vecchi]{kirkpatrick1983optimization}
Scott Kirkpatrick, C~Daniel Gelatt~Jr, and Mario~P Vecchi.
\newblock Optimization by simulated annealing.
\newblock \emph{science}, 220\penalty0 (4598):\penalty0 671--680, 1983.

\bibitem[Kwon et~al.(2023)Kwon, Xie, Bullard, and Sadigh]{kwon2023reward}
Minae Kwon, Sang~Michael Xie, Kalesha Bullard, and Dorsa Sadigh.
\newblock Reward design with language models.
\newblock \emph{arXiv preprint arXiv:2303.00001}, 2023.

\bibitem[Lin et~al.(2021)Lin, Hilton, and Evans]{lin2021truthfulqa}
Stephanie Lin, Jacob Hilton, and Owain Evans.
\newblock Truthfulqa: Measuring how models mimic human falsehoods, 2021.
\newblock URL \url{https://huggingface.co/datasets/domenicrosati/TruthfulQA}.
\newblock Accessed: 2025-02-28.

\bibitem[Lin et~al.(2024)Lin, Liang, Xu, Wang, Luo, Shi, Li, Yang, and Tu]{lin2024critical}
Zicheng Lin, Tian Liang, Jiahao Xu, Xing Wang, Ruilin Luo, Chufan Shi, Siheng Li, Yujiu Yang, and Zhaopeng Tu.
\newblock Critical tokens matter: Token-level contrastive estimation enhence llm's reasoning capability.
\newblock \emph{arXiv preprint arXiv:2411.19943}, 2024.

\bibitem[Liu et~al.(2020)Liu, Cui, Liu, Huang, Wang, and Zhang]{liu2020logiqa}
Jian Liu, Leyang Cui, Hanmeng Liu, Dandan Huang, Yile Wang, and Yue Zhang.
\newblock Logiqa: A challenge dataset for machine reading comprehension with logical reasoning.
\newblock \emph{arXiv preprint arXiv:2007.08124}, 2020.

\bibitem[Liu et~al.(2024)Liu, Geng, Wu, Sucholutsky, Lombrozo, and Griffiths]{liu2024mind}
Ryan Liu, Jiayi Geng, Addison~J Wu, Ilia Sucholutsky, Tania Lombrozo, and Thomas~L Griffiths.
\newblock Mind your step (by step): Chain-of-thought can reduce performance on tasks where thinking makes humans worse.
\newblock \emph{arXiv preprint arXiv:2410.21333}, 2024.

\bibitem[Ma et~al.(2023)Ma, Liang, Wang, Huang, Bastani, Jayaraman, Zhu, Fan, and Anandkumar]{ma2023eureka}
Yecheng~Jason Ma, William Liang, Guanzhi Wang, De-An Huang, Osbert Bastani, Dinesh Jayaraman, Yuke Zhu, Linxi Fan, and Anima Anandkumar.
\newblock Eureka: Human-level reward design via coding large language models.
\newblock \emph{arXiv preprint arXiv:2310.12931}, 2023.

\bibitem[Ma et~al.(2024)Ma, Chen, Kim, Chen, and Wang]{ma2024llmparser}
Zeyang Ma, An~Ran Chen, Dong~Jae Kim, Tse-Hsun Chen, and Shaowei Wang.
\newblock Llmparser: An exploratory study on using large language models for log parsing.
\newblock In \emph{Proceedings of the IEEE/ACM 46th International Conference on Software Engineering}, pages 1--13, 2024.

\bibitem[Madaan et~al.(2023)Madaan, Tandon, Gupta, Hallinan, Gao, Wiegreffe, Alon, Dziri, Prabhumoye, Yang, et~al.]{madaan2023self}
Aman Madaan, Niket Tandon, Prakhar Gupta, Skyler Hallinan, Luyu Gao, Sarah Wiegreffe, Uri Alon, Nouha Dziri, Shrimai Prabhumoye, Yiming Yang, et~al.
\newblock Self-refine: Iterative refinement with self-feedback.
\newblock \emph{Advances in Neural Information Processing Systems}, 36:\penalty0 46534--46594, 2023.

\bibitem[Mueller et~al.(2023)Mueller, Webson, Petty, and Linzen]{mueller2023context}
Aaron Mueller, Albert Webson, Jackson Petty, and Tal Linzen.
\newblock In-context learning generalizes, but not always robustly: The case of syntax.
\newblock \emph{arXiv preprint arXiv:2311.07811}, 2023.

\bibitem[Muhammad et~al.(2025)Muhammad, Ousidhoum, Abdulmumin, Wahle, Ruas, Beloucif, de~Kock, Surange, Teodorescu, Ahmad, et~al.]{muhammad2025brighter}
Shamsuddeen~Hassan Muhammad, Nedjma Ousidhoum, Idris Abdulmumin, Jan~Philip Wahle, Terry Ruas, Meriem Beloucif, Christine de~Kock, Nirmal Surange, Daniela Teodorescu, Ibrahim~Said Ahmad, et~al.
\newblock Brighter: Bridging the gap in human-annotated textual emotion recognition datasets for 28 languages.
\newblock \emph{arXiv preprint arXiv:2502.11926}, 2025.

\bibitem[{OpenAssistant}(2023)]{openassistant_reward-model-deberta-v3-large-v2_2023}
{OpenAssistant}.
\newblock {OpenAssistant/reward-model-deberta-v3-large-v2}.
\newblock \url{https://huggingface.co/OpenAssistant/reward-model-deberta-v3-large-v2}, February 2023.
\newblock MIT License. Accessed: 2025-04-25.

\bibitem[Ouyang et~al.(2022)Ouyang, Wu, Jiang, Almeida, Wainwright, Mishkin, Zhang, Agarwal, Slama, Ray, et~al.]{ouyang2022training}
Long Ouyang, Jeffrey Wu, Xu~Jiang, Diogo Almeida, Carroll Wainwright, Pamela Mishkin, Chong Zhang, Sandhini Agarwal, Katarina Slama, Alex Ray, et~al.
\newblock Training language models to follow instructions with human feedback.
\newblock \emph{Advances in neural information processing systems}, 35:\penalty0 27730--27744, 2022.

\bibitem[Pan et~al.(2019)Pan, Cai, Meng, Chen, Huang, and Liu]{pan2019reinforcement}
Ling Pan, Qingpeng Cai, Qi~Meng, Wei Chen, Longbo Huang, and Tie-Yan Liu.
\newblock Reinforcement learning with dynamic boltzmann softmax updates.
\newblock \emph{arXiv preprint arXiv:1903.05926}, 2019.

\bibitem[Puerto et~al.(2024)Puerto, Chubakov, Zhu, Madabushi, and Gurevych]{puerto2024fine}
Haritz Puerto, Tilek Chubakov, Xiaodan Zhu, Harish~Tayyar Madabushi, and Iryna Gurevych.
\newblock Fine-tuning with divergent chains of thought boosts reasoning through self-correction in language models.
\newblock 2024.

\bibitem[Rein et~al.(2024)Rein, Hou, Stickland, Petty, Pang, Dirani, Michael, and Bowman]{rein2024gpqa}
David Rein, Betty~Li Hou, Asa~Cooper Stickland, Jackson Petty, Richard~Yuanzhe Pang, Julien Dirani, Julian Michael, and Samuel~R Bowman.
\newblock Gpqa: A graduate-level google-proof q\&a benchmark.
\newblock In \emph{First Conference on Language Modeling}, 2024.

\bibitem[Renze(2024{\natexlab{a}})]{renze-2024-effect}
Matthew Renze.
\newblock The effect of sampling temperature on problem solving in large language models.
\newblock In Yaser Al-Onaizan, Mohit Bansal, and Yun-Nung Chen, editors, \emph{Findings of the Association for Computational Linguistics: EMNLP 2024}, pages 7346--7356, Miami, Florida, USA, November 2024{\natexlab{a}}. Association for Computational Linguistics.
\newblock \doi{10.18653/v1/2024.findings-emnlp.432}.
\newblock URL \url{https://aclanthology.org/2024.findings-emnlp.432/}.

\bibitem[Renze(2024{\natexlab{b}})]{renze2024effect}
Matthew Renze.
\newblock The effect of sampling temperature on problem solving in large language models.
\newblock In \emph{Findings of the Association for Computational Linguistics: EMNLP 2024}, pages 7346--7356, 2024{\natexlab{b}}.

\bibitem[Rocamonde et~al.(2023)Rocamonde, Montesinos, Nava, Perez, and Lindner]{rocamonde2023vision}
Juan Rocamonde, Victoriano Montesinos, Elvis Nava, Ethan Perez, and David Lindner.
\newblock Vision-language models are zero-shot reward models for reinforcement learning.
\newblock \emph{arXiv preprint arXiv:2310.12921}, 2023.

\bibitem[Ruder(2017)]{ruder2017overview}
Sebastian Ruder.
\newblock An overview of multi-task learning in deep neural networks.
\newblock \emph{arXiv preprint arXiv:1706.05098}, 2017.

\bibitem[Russo et~al.(2018)Russo, Van~Roy, Kazerouni, Osband, Wen, et~al.]{russo2018tutorial}
Daniel~J Russo, Benjamin Van~Roy, Abbas Kazerouni, Ian Osband, Zheng Wen, et~al.
\newblock A tutorial on thompson sampling.
\newblock \emph{Foundations and Trends{\textregistered} in Machine Learning}, 11\penalty0 (1):\penalty0 1--96, 2018.

\bibitem[Shi et~al.(2022)Shi, Zhang, and Lipani]{shi2022stepgame}
Zhengxiang Shi, Qiang Zhang, and Aldo Lipani.
\newblock Stepgame: A new benchmark for robust multi-hop spatial reasoning in texts.
\newblock In \emph{Proceedings of the AAAI conference on artificial intelligence}, volume~36, pages 11321--11329, 2022.

\bibitem[Shinn et~al.(2023)Shinn, Cassano, Gopinath, Narasimhan, and Yao]{shinn2023reflexion}
Noah Shinn, Federico Cassano, Ashwin Gopinath, Karthik Narasimhan, and Shunyu Yao.
\newblock Reflexion: Language agents with verbal reinforcement learning.
\newblock \emph{Advances in Neural Information Processing Systems}, 36:\penalty0 8634--8652, 2023.

\bibitem[Shum et~al.(2023)Shum, Diao, and Zhang]{shum2023automatic}
KaShun Shum, Shizhe Diao, and Tong Zhang.
\newblock Automatic prompt augmentation and selection with chain-of-thought from labeled data.
\newblock \emph{arXiv preprint arXiv:2302.12822}, 2023.

\bibitem[Silver et~al.(2014)Silver, Lever, Heess, Degris, Wierstra, and Riedmiller]{silver2014deterministic}
David Silver, Guy Lever, Nicolas Heess, Thomas Degris, Daan Wierstra, and Martin Riedmiller.
\newblock Deterministic policy gradient algorithms.
\newblock In \emph{International conference on machine learning}, pages 387--395. Pmlr, 2014.

\bibitem[Sprague et~al.(2024)Sprague, Yin, Rodriguez, Jiang, Wadhwa, Singhal, Zhao, Ye, Mahowald, and Durrett]{sprague2024cot}
Zayne Sprague, Fangcong Yin, Juan~Diego Rodriguez, Dongwei Jiang, Manya Wadhwa, Prasann Singhal, Xinyu Zhao, Xi~Ye, Kyle Mahowald, and Greg Durrett.
\newblock To cot or not to cot? chain-of-thought helps mainly on math and symbolic reasoning.
\newblock \emph{arXiv preprint arXiv:2409.12183}, 2024.

\bibitem[Srivastava et~al.(2022)Srivastava, Rastogi, Rao, Shoeb, Abid, Fisch, Brown, Santoro, Gupta, Garriga-Alonso, et~al.]{srivastava2022beyond}
Aarohi Srivastava, Abhinav Rastogi, Abhishek Rao, Abu Awal~Md Shoeb, Abubakar Abid, Adam Fisch, Adam~R Brown, Adam Santoro, Aditya Gupta, Adri{\`a} Garriga-Alonso, et~al.
\newblock Beyond the imitation game: Quantifying and extrapolating the capabilities of language models.
\newblock \emph{arXiv preprint arXiv:2206.04615}, 2022.

\bibitem[Stechly et~al.(2024)Stechly, Valmeekam, and Kambhampati]{stechly2024chain}
Kaya Stechly, Karthik Valmeekam, and Subbarao Kambhampati.
\newblock Chain of thoughtlessness? an analysis of cot in planning.
\newblock In \emph{The Thirty-eighth Annual Conference on Neural Information Processing Systems}, 2024.

\bibitem[Sutton and Barto(2018)]{sutton2018reinforcement}
Richard~S. Sutton and Andrew~G. Barto.
\newblock \emph{Reinforcement Learning: An Introduction}.
\newblock MIT Press, 2018.

\bibitem[Tang et~al.(2023)Tang, Kong, Huang, and Xue]{tang2023large}
Ruixiang Tang, Dehan Kong, Longtao Huang, and Hui Xue.
\newblock Large language models can be lazy learners: Analyze shortcuts in in-context learning.
\newblock \emph{arXiv preprint arXiv:2305.17256}, 2023.

\bibitem[Tong et~al.(2024)Tong, Choenni, Lewis, and Shutova]{tong2024metaphor}
Xiaoyu Tong, Rochelle Choenni, Martha Lewis, and Ekaterina Shutova.
\newblock Metaphor understanding challenge dataset for llms.
\newblock \emph{arXiv preprint arXiv:2403.11810}, 2024.

\bibitem[Wang(2025)]{wang2025tutorial}
Jun Wang.
\newblock A tutorial on llm reasoning: Relevant methods behind chatgpt o1.
\newblock \emph{arXiv preprint arXiv:2502.10867}, 2025.

\bibitem[Wang et~al.(2024)Wang, Li, Sun, Fu, Cheng, Ye, Ye, Qiu, and Huang]{wang2024llm}
Siyin Wang, Shimin Li, Tianxiang Sun, Jinlan Fu, Qinyuan Cheng, Jiasheng Ye, Junjie Ye, Xipeng Qiu, and Xuanjing Huang.
\newblock Llm can achieve self-regulation via hyperparameter aware generation.
\newblock \emph{arXiv preprint arXiv:2402.11251}, 2024.

\bibitem[Wang et~al.(2023)Wang, Hu, Lu, Zhu, Zhang, Subramaniam, Loomba, Zhang, Sun, and Wang]{wang2023scibench}
Xiaoxuan Wang, Ziniu Hu, Pan Lu, Yanqiao Zhu, Jieyu Zhang, Satyen Subramaniam, Arjun~R Loomba, Shichang Zhang, Yizhou Sun, and Wei Wang.
\newblock Scibench: Evaluating college-level scientific problem-solving abilities of large language models.
\newblock \emph{arXiv preprint arXiv:2307.10635}, 2023.

\bibitem[Wang et~al.(2022)Wang, Wei, Schuurmans, Le, Chi, Narang, Chowdhery, and Zhou]{wang2022self}
Xuezhi Wang, Jason Wei, Dale Schuurmans, Quoc Le, Ed~Chi, Sharan Narang, Aakanksha Chowdhery, and Denny Zhou.
\newblock Self-consistency improves chain of thought reasoning in language models.
\newblock \emph{arXiv preprint arXiv:2203.11171}, 2022.

\bibitem[Wei et~al.(2022)Wei, Wang, Schuurmans, Bosma, Xia, Chi, Le, Zhou, et~al.]{wei2022chain}
Jason Wei, Xuezhi Wang, Dale Schuurmans, Maarten Bosma, Fei Xia, Ed~Chi, Quoc~V Le, Denny Zhou, et~al.
\newblock Chain-of-thought prompting elicits reasoning in large language models.
\newblock \emph{Advances in neural information processing systems}, 35:\penalty0 24824--24837, 2022.

\bibitem[Wolf et~al.(2020)Wolf, Debut, Sanh, Chaumond, Delangue, Moi, Cistac, Rault, Louf, Funtowicz, et~al.]{wolf2020transformers}
Thomas Wolf, Lysandre Debut, Victor Sanh, Julien Chaumond, Clement Delangue, Anthony Moi, Pierric Cistac, Tim Rault, R{\'e}mi Louf, Morgan Funtowicz, et~al.
\newblock Transformers: State-of-the-art natural language processing.
\newblock In \emph{Proceedings of the 2020 conference on empirical methods in natural language processing: system demonstrations}, pages 38--45, 2020.

\bibitem[Xu et~al.(2024)Xu, Hua, Li, and Wang]{xu2024exploring}
Yanzhi Xu, Yueying Hua, Shichen Li, and Zhongqing Wang.
\newblock Exploring chain-of-thought for multi-modal metaphor detection.
\newblock In \emph{Proceedings of the 62nd Annual Meeting of the Association for Computational Linguistics (Volume 1: Long Papers)}, pages 91--101, 2024.

\bibitem[Yao et~al.(2023)Yao, Yu, Zhao, Shafran, Griffiths, Cao, and Narasimhan]{yao2023tree}
Shunyu Yao, Dian Yu, Jeffrey Zhao, Izhak Shafran, Tom Griffiths, Yuan Cao, and Karthik Narasimhan.
\newblock Tree of thoughts: Deliberate problem solving with large language models.
\newblock \emph{Advances in neural information processing systems}, 36:\penalty0 11809--11822, 2023.

\bibitem[Zhang et~al.(2022)Zhang, Zhang, Li, and Smola]{zhang2022automatic}
Zhuosheng Zhang, Aston Zhang, Mu~Li, and Alex Smola.
\newblock Automatic chain of thought prompting in large language models.
\newblock \emph{arXiv preprint arXiv:2210.03493}, 2022.

\bibitem[Zhao et~al.(2023)Zhao, Lee, and Hsu]{zhao2023large}
Zirui Zhao, Wee~Sun Lee, and David Hsu.
\newblock Large language models as commonsense knowledge for large-scale task planning.
\newblock \emph{Advances in Neural Information Processing Systems}, 36:\penalty0 31967--31987, 2023.

\bibitem[Zheng et~al.(2023)Zheng, Chiang, Sheng, Zhuang, Wu, Zhuang, Lin, Li, Li, Xing, et~al.]{zheng2023judging}
Lianmin Zheng, Wei-Lin Chiang, Ying Sheng, Siyuan Zhuang, Zhanghao Wu, Yonghao Zhuang, Zi~Lin, Zhuohan Li, Dacheng Li, Eric Xing, et~al.
\newblock Judging llm-as-a-judge with mt-bench and chatbot arena.
\newblock \emph{Advances in Neural Information Processing Systems}, 36:\penalty0 46595--46623, 2023.

\bibitem[Zhong et~al.(2024)Zhong, Huang, Gao, Wen, Lin, Zitnik, and Zhou]{zhong2024let}
Shanshan Zhong, Zhongzhan Huang, Shanghua Gao, Wushao Wen, Liang Lin, Marinka Zitnik, and Pan Zhou.
\newblock Let's think outside the box: Exploring leap-of-thought in large language models with creative humor generation.
\newblock In \emph{Proceedings of the IEEE/CVF Conference on Computer Vision and Pattern Recognition}, pages 13246--13257, 2024.

\bibitem[Zhou et~al.(2022)Zhou, Sch{\"a}rli, Hou, Wei, Scales, Wang, Schuurmans, Cui, Bousquet, Le, et~al.]{zhou2022least}
Denny Zhou, Nathanael Sch{\"a}rli, Le~Hou, Jason Wei, Nathan Scales, Xuezhi Wang, Dale Schuurmans, Claire Cui, Olivier Bousquet, Quoc Le, et~al.
\newblock Least-to-most prompting enables complex reasoning in large language models.
\newblock \emph{arXiv preprint arXiv:2205.10625}, 2022.

\end{thebibliography}
\bibliographystyle{plainnat}

\clearpage

\appendix

\section{AdaReasoner Algorithm}
\begin{algorithm}[h]
\caption{AdaReasoner Algorithm}
\label{alg:adareasoner_per_question}
\begin{algorithmic}[1]
\Require 
 Training dataset $\mathcal{D}_{\text{train}}$ with $M$ question-response pairs ($q,R$), 
 % Question $q$, 
  reward function $r(\Phi(q|a),R)$,  LLM $\Phi$, 
  policy network $\Pi_{\Theta}(a\mid q, \Phi)$,
  action space $\mathcal{A} = \mathcal{A}_p \times \mathcal{A}_t \times \mathcal{A}_s$, 
  the number of  per-question trails   $T$, Boltzmann exploration Temperature $\tau$, learning rate $\eta$.

\noindent
\textbf{Training:}

 \For{each $q_i$, $R_i$ in $\mathcal{D}_{\mathrm{train}}$} 
  \For{$l = 1$ to $T$}
    \State Boltzmann Sampling
    \[
      a_t, a_p, a_s \sim \mathrm{Softmax}\!\bigl(\log \Pi_{\Theta}(\mathcal{A}\mid q_i,\Phi)/\tau\bigr)
    \]
    \State Generate answer 
    \[
      y_l \gets \Phi(q_i|a_t, a_p, a_s)
    \]
    \State Compute reward 
    \[
      r_l \gets r(y_l, R_i)
    \]
    \State Update policy parameters:
    \[
      \Theta \;\gets\; \Theta \;+\; \eta \, r_l \,\nabla_{\Theta} \log \Pi_{\Theta}(a_j\mid q_i,\Phi) \quad j \in \{t,p,s\}
    \]
  \EndFor
\EndFor
\Statex
 
\noindent
\end{algorithmic}
\begin{algorithmic}[1] % 新的一段，重新编号
    \Statex \textbf{Inference for a given question $q$:}  
    \State Select $a^* = \arg\max_a \Pi_{\Theta}(a\mid q, \Phi)$ with trained $\Pi_\Theta$
    \State Output final answer $y^* \gets \Phi(q|a^*)$
\end{algorithmic}
\end{algorithm}

\section{Theoretical Analysis of AdaReasoner}
\label{subsec: theory}

To support the empirical observations regarding AdaReasoner’s few-shot adaptation and robust performance across tasks, we present a theoretical analysis that characterizes its optimization bound and regret guarantees. We first analyze the error bound, and under the SGD condition, AdaReasoner achieves the $\frac{2\bigl(J(\Theta^*)-J(\Theta_0)\bigr)}{\eta\,K}
\;+\;L\,\eta\,\sigma^2$ error bound. We then derive a regret bound for AdaReasoner’s softmax-based exploration policy using results from the non-stochastic multiarmed bandit theorem~\citep{sutton2018reinforcement}. This regret bound is provably sublinear, scaling as $O(\sqrt{K |\mathcal{A}|\log|\mathcal{A}|})$. Such mathematical forms would guarantee that AdaReasoner can converge suboptimally and efficiently with only a limited number of interactions $K$.

%\textcolor{red}{Please write here, what is the Theoretical analysis for? why need the Theoretical analysis?  to support what?  use some intro sentences like:  We first analyze ...., then  .... }
 
%\paragraph{Algorithm description}
%\label{app:algorithm}

\paragraph{Fast convergence on few-shot examples.}
\label{app:fastcon}

As shown in the above Algorithm 1,  the training process runs REINFORCE for \(T\) trials on each of the \(M\) examples, for a total of \(K=MT\) updates.  At iteration \(k\), we sample \((q,R)\) from $\mathcal{D}_{\text{train}}$, draw  actions \(a\sim\Pi_{\Theta_k}\), compute reward \(r_k\), and use the stochastic gradient estimator presented in \autoref{eq:prof12} for updating $\Theta$:
\begin{equation}
g(\Theta_k)
= r_k\,\nabla_{\Theta}\log\Pi_{\Theta_k}(a\mid q).
\label{eq:prof12}
\end{equation}
%We use step size \(\eta>0\) and update \(\Theta_{k+1}=\Theta_k+\eta\,g(\Theta_k)\). 

To analyze the convergence of AdaReasoner in optimizing $\Theta$, we define the expected‐reward objective as~\autoref{eq:prof11}: 
\begin{equation}
J(\Theta)
=\mathbb{E}_{q\sim D}\,\mathbb{E}_{a\sim\Pi_{\Theta}(\cdot\mid q)} \bigl[r\bigl(\Phi(q\mid a),R\bigr)\bigr].
\label{eq:prof11}
\end{equation}

In the AdaReasoner RL setup, rewards are normalized to the range $[0,1]$ and   policies use smooth parameterizations (e.g., a softmax function applied to linear logits).
 This setup implies that the objective function $J(\Theta)$ is $L$-smooth, meaning that the gradient of the objective function doesn't change too rapidly, i.e., gradient estimates based on sampled data  have bounded variance. 
Formally, this implies the following:
%In this section, we show that AdaReasoner converges up to an error residual of $\frac{2\bigl(J(\Theta^*) - J(\Theta_0)\bigr)}{\eta\,K} + \sigma^2$.
%In Adareasoner RL setups, rewards are normalized to $[0,1]$ and policies employ smooth parameterizations (e.g., softmax over linear logits), which makes $J(\Theta)$ $L$-smooth and the single-sample gradient estimator have bounded variance—thus rendering Assumption~\ref{asm:smoothness_variance_simple} benign and identifying $\sigma^2$ as the irreducible noise floor reached after few‐shot convergence.
%\textcolor{red}{This is not an "assumption"? as the above discussion has said: "This setup implies that the objective function $J(\Theta)$ is $L$-smooth" . }
%\begin{assumption}[Gradient Lipschitz and Variance Bound]
%\label{asm:smoothness_variance_simple}
There exists a constant \(L>0\) such that for all \(\Theta, \Theta'\), the objective function $J(\Theta)$ satisfies the Lipschitz condition: 
\begin{align*}
J(\Theta') \;\le\; J(\Theta)
\;+\;\nabla J(\Theta)^{\!\top}\bigl(\Theta' - \Theta\bigr)
\;+\;\frac{L}{2}\,\bigl\|\Theta' - \Theta\bigr\|^2,
\end{align*}
where $\nabla J(\Theta)$ is the gradient of the objective with respect to the model parameters.

The stochastic gradient estimator \(g(\Theta)\), which approximates the gradient,  satisfies
\[
\mathbb{E}\bigl[g(\Theta)\bigr] \;=\;\nabla J(\Theta),
\qquad
\mathbb{E}\bigl[\bigl\|g(\Theta) - \nabla J(\Theta)\bigr\|^2 \bigr] \;\le\;\sigma^2.
\]
Here:
\begin{itemize}
  \item \(\mathbb{E}[\cdot]\) is the expectation  over the randomness in sampling \((q,a)\).
  \item \(L\) is the Lipschitz constant of the gradient \(\nabla J\), which bounds how quickly the gradient changes with respect to  $\Theta$.
  \item \(\sigma^2\) bounds the variance of the gradient estimator.
\end{itemize}
%\end{assumption}

Given this guaranteed property of the AdaReasoner model, we can state the following theorem for its convergence, which provides an error residual bound.

\begin{theorem}[Nonconvex SGD Convergence]
\label{lem:sgd_nonconvex}
Under the smoothness property of the objective function and bounded gradient variance,  if   running stochastic gradient descent (SGD)  with constant step size \(0<\eta\le 1/L\) for \(K\) iterations, then the following bound holds for the average squared gradient:
\[
\frac{1}{K}\sum_{k=0}^{K-1}\mathbb{E}\bigl[\bigl\|\nabla J(\Theta_k)\bigr\|^2\bigr]
\;\le\;
\frac{2\bigl(J(\Theta^*)-J(\Theta_0)\bigr)}{\eta\,K}
\;+\;L\,\eta\,\sigma^2,
\]
where \(J(\Theta^*)=\max_\Theta J(\Theta)\).
\end{theorem}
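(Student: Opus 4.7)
The plan is to prove this as the standard nonconvex SGD guarantee, but adapted to the ascent setting used by REINFORCE (the update maximizes \(J\) rather than minimizes a loss). First I would write the policy-gradient update as \(\Theta_{k+1}=\Theta_k+\eta\,g(\Theta_k)\) with \(\mathbb{E}[g(\Theta_k)\mid\Theta_k]=\nabla J(\Theta_k)\) and \(\mathbb{E}[\|g(\Theta_k)-\nabla J(\Theta_k)\|^2\mid\Theta_k]\le\sigma^2\), both of which are granted by the assumptions listed just before the theorem. From \(L\)-smoothness of \(J\), the ascent form of the descent lemma gives, for each \(k\),
\begin{equation*}
J(\Theta_{k+1})\;\ge\;J(\Theta_k)\;+\;\nabla J(\Theta_k)^{\!\top}(\Theta_{k+1}-\Theta_k)\;-\;\tfrac{L}{2}\|\Theta_{k+1}-\Theta_k\|^2 .
\end{equation*}

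Next I would substitute the update \(\Theta_{k+1}-\Theta_k=\eta g(\Theta_k)\), take conditional expectation given \(\Theta_k\), and use unbiasedness on the linear term together with the identity \(\mathbb{E}\|g(\Theta_k)\|^2=\|\nabla J(\Theta_k)\|^2+\mathrm{Var}(g(\Theta_k))\le\|\nabla J(\Theta_k)\|^2+\sigma^2\) on the quadratic term. This yields
\begin{equation*}
\mathbb{E}[J(\Theta_{k+1})\mid\Theta_k]\;\ge\;J(\Theta_k)\;+\;\eta\!\left(1-\tfrac{L\eta}{2}\right)\!\|\nabla J(\Theta_k)\|^2\;-\;\tfrac{L\eta^{2}\sigma^{2}}{2}.
\end{equation*}
The step-size condition \(\eta\le 1/L\) makes the coefficient \(\eta(1-L\eta/2)\ge \eta/2\), which is the key sign-preserving step and the main technical care point, since with a too-large step size the coefficient could become non-positive and the bound would collapse.

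I would then take total expectation, telescope from \(k=0\) to \(K-1\), and use the envelope inequality \(\mathbb{E}[J(\Theta_K)]\le J(\Theta^{*})\) to obtain
\begin{equation*}
J(\Theta^{*})-J(\Theta_0)\;\ge\;\tfrac{\eta}{2}\sum_{k=0}^{K-1}\mathbb{E}\|\nabla J(\Theta_k)\|^{2}\;-\;\tfrac{K L\eta^{2}\sigma^{2}}{2}.
\end{equation*}
Rearranging and dividing by \(\eta K/2\) yields exactly the claimed bound \(\frac{1}{K}\sum_{k=0}^{K-1}\mathbb{E}\|\nabla J(\Theta_k)\|^{2}\le\frac{2(J(\Theta^{*})-J(\Theta_0))}{\eta K}+L\eta\sigma^{2}\).

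The proof is essentially mechanical once the ascent vs.\ descent convention is fixed; the only subtlety I foresee is the sign bookkeeping when passing from the ``loss minimization'' form of Ghadimi--Lan (which the statement's notation mirrors) to the ``reward maximization'' form actually executed by REINFORCE, and the accompanying check that \(\eta\le 1/L\) is exactly what guarantees the residual quadratic term stays nonnegative so that the gradient-norm sum is upper-bounded. No additional assumptions beyond \(L\)-smoothness, bounded gradient variance, and \(\eta\le 1/L\) are required, and the constants match those stated in the theorem.
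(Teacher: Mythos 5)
Your proposal is correct and follows essentially the same route as the paper's proof: apply the ascent form of the descent lemma arising from $L$-smoothness, substitute the gradient-ascent update, use the bias–variance decomposition $\mathbb{E}\|g\|^2=\|\nabla J\|^2+\mathrm{Var}(g)\le\|\nabla J\|^2+\sigma^2$, invoke $\eta\le 1/L$ to lower-bound $\eta-L\eta^2/2$ by $\eta/2$, telescope, and cap $\mathbb{E}[J(\Theta_K)]$ by $J(\Theta^*)$. The only stylistic difference is that you explicitly condition on $\Theta_k$ before taking total expectation, which the paper leaves implicit; the substance and constants match.
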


\begin{proof}
By the smoothness property of  \(J\), we have
\[
J(\Theta_{k+1})
\;\ge\;
J(\Theta_k)
\;+\;\nabla J(\Theta_k)^\top(\Theta_{k+1}-\Theta_k)
\;-\;\frac{L}{2}\|\Theta_{k+1}-\Theta_k\|^2.
\]
Substituting \(\Theta_{k+1}=\Theta_k+\eta\,g(\Theta_k)\) and taking the expectation:
\[
\mathbb{E}[J(\Theta_{k+1})]
\;\ge\;
\mathbb{E}[J(\Theta_k)]
\;+\;\eta\,\mathbb{E}[\|\nabla J(\Theta_k)\|^2]
\;-\;\frac{L\eta^2}{2}\,\mathbb{E}[\|g(\Theta_k)\|^2].
\]
Since
\[
\mathbb{E}[\|g(\Theta_k)\|^2]
= \|\nabla J(\Theta_k)\|^2 + \mathbb{E}[\|g(\Theta_k)-\nabla J(\Theta_k)\|^2]
\;\le\;
\|\nabla J(\Theta_k)\|^2 + \sigma^2,
\]
we get
\[
\mathbb{E}[J(\Theta_{k+1})]
\;\ge\;
\mathbb{E}[J(\Theta_k)]
\;+\;\Bigl(\eta-\tfrac{L\eta^2}{2}\Bigr)\,\mathbb{E}[\|\nabla J(\Theta_k)\|^2]
\;-\;\tfrac{L\eta^2}{2}\,\sigma^2.
\]
Rearranging and summing over \(k=0,\dots,K-1\):
\[
\Bigl(\eta-\tfrac{L\eta^2}{2}\Bigr)\sum_{k=0}^{K-1}\mathbb{E}[\|\nabla J(\Theta_k)\|^2]
\;\le\;
J(\Theta^*) - J(\Theta_0)
\;+\;\tfrac{L\eta^2K}{2}\,\sigma^2.
\]
Since \(\eta\le1/L\), we know that   \(\eta-\tfrac{L\eta^2}{2}\ge\tfrac{\eta}{2}\), dividing by \(K(\eta/2)\) yields the claimed bound.
\end{proof}

\paragraph{Regret analysis of AdaReasoner.}
\label{app:reg}
In AdaReasoner, we design the action selection process by factorizing the policy into independent components, each responsible for a specific hyperparameter setting (e.g.,   temperature, reasoning steps, and reasoning instructions). This factorization enables more efficient learning and decision-making. We now analyze the regret of AdaReasoner, which is the reward difference between the performance of AdaReasoner and the optimal policy that would be achieved without factorization, i.e., the optimal joint selection of all hyperparameters.

At the $k$-th step training, given  the question \(q_k\) as a context and the joint action space 
\(\mathcal{A} = \mathcal{A}_p \times \mathcal{A}_t \times \mathcal{A}_s\) of size \(|\mathcal{A}|\)   as the arms in the multi-armed bandit problem,  AdaReasoner selects
\[
a_k \sim \pi_{\Theta_k}(a\mid q_k)\;\propto\;\exp\bigl(\tfrac{1}{\tau}\,f_{\Theta_k}(q_k;a)\bigr),
\]
where  \(\beta=1/\tau\) is the   inverse temperature of  Boltzmann exploration ~\citep{sutton2018reinforcement}.  

Let the expected reward of arm \(a\) in context \(q_k\) be
\(\mu_k(a)=\mathbb{E}[r(q_k,\Phi(q_k\mid a))]\), and define the optimal arm as \(a_k^*=\arg\max_a\mu_k(a)\).  The instantaneous regret at iteration $k$ is:
\[
\delta_k = \mu_k(a_k^*) - \mu_k(a_k),
\]
and the cumulative regret after \(K\) pulls is
$
R(K) = \sum_{k=1}^K \delta_k.
$

By viewing Softmax exploration as an instance of the exponential‐weighting scheme, we can apply classical results from the non-stochastic multi-armed bandit problem, which yield the following bound for appropriately chosen \(\beta\) \citep{auer2002nonstochastic}:
%classical results for the nonstochastic multi‐armed bandit yield (for appropriately chosen \(\beta\)) \citep{auer2002nonstochastic}:
\[
R(K) \;\le\; O\!\Bigl(\sqrt{K\,|\mathcal{A}|\,\ln|\mathcal{A}|}\Bigr).
\] 
Consequently, the per‐step regret satisfies
\[
\frac{R(K)}{K}
\le O\!\Bigl(\sqrt{\tfrac{|\mathcal{A}|\ln|\mathcal{A}|}{K}}\Bigr),
\] 
which vanishes rapidly as \(K\) grows.  In particular, once 
\(K \gg |\mathcal{A}|\ln|\mathcal{A}|\), the average regret is negligible. This demonstrates that AdaReasoner achieves near‐optimal performance in only a few updates, supporting the claim of ``few-shot'' convergence.  

Moreover, although our policy network factorizes into three heads (one per hyperparameter), it shares a common backbone; the total arm count 
\(|\mathcal{A}|=|\mathcal{A}_p|\times|\mathcal{A}_t|\times|\mathcal{A}_s|\) 
enters the same regret bound without further inflation.

\section{Reasoning Configuration Details}
\label{app:tech}

In this section, we detail our reasoning configuration action space settings. The number of reasoning steps   is chosen from candidates in the range \(\{3,\dots,10\}\), and the temperature is discretized into predefined intervals from 0.0 to 1.0, with a step size of 0.1. The  reasoning instructions are built upon various reasoning strategies, in the form of  combining \emph{base} and \emph{variations}. See~\autoref{tab:action_space} for  details.  % (see~\autoref{tab:instruction_prompt_variations}).  
%\subsection{Additional Plots}

%\begin{figure}[h]
%    \centering
%    \includegraphics[width=0.8\textwidth]{boxplot.png}
%    \caption{Step Length and Temperature Boxplots by Evaluation (P-values included)}
%    \label{fig:evaluation_boxplot}
%\end{figure}

\begin{table}[h] \small
    \centering
    \renewcommand{\arraystretch}{1.2} % 调整行间距
    \setlength{\tabcolsep}{12pt} % 调整列间距
        \caption{Configuration Action Space of AdaReasoner} 
    \begin{tabular}{|c|c|} 
        \hline
        \rowcolor{brown!25} \textbf{Action Space} & \textbf{Expression} \\
        \hline
        Number of Steps  & $\mathcal{A}_s = \{ x \mid x \in \mathbb{Z}, 3 \leq x \leq 10 \}$ \\
        \hline
        Temperature & $\mathcal{A}_t = \{ 0.0 + 0.1k \mid k \in \mathbb{Z}, 0 \leq k \leq 10 \}$ \\
        \hline
         Reasoning Instructions & $\mathcal{A}_p = \{  \text{base + variation} \}$ \\
        \hline
    \end{tabular}
    \label{tab:action_space}
\end{table}
\vspace{-0.1in}
\begin{table}[h]
    \centering  \scriptsize
    \renewcommand{\arraystretch}{1.2} % 调整行间距
    \setlength{\tabcolsep}{3pt} % 调整列间距
    \begin{tabular}{ cc }
    \begin{tabular}{|p{6.5cm}|} 
        \hline
          \textbf{Base Instruction }  \\        \hline
        Break down your reasoning into clear, sequential steps. \\ \hline
        Systematically structure your analysis, elaborating on each step with thorough detail.
          \\  \hline
        Examine the logical connections between concepts and articulate each step in depth.
         \\  \hline
        Consider multiple perspectives and explore alternative viewpoints comprehensively. 
         \\         \hline
        Apply creative reasoning to unearth unconventional insights and challenge standard assumptions. 
         \\   \hline
        Adopt a detailed and rigorous approach, balancing specific details with overarching themes. 
        \\   \hline
        Reflect on your assumptions and refine your argument through critical self-questioning and validation.
         \\   \hline
        Explain your reasoning step-by-step in a clear, accessible manner for all audiences.  
         \\    \hline
        Include a systematic self-check and verification of your reasoning process to ensure consistency.
         \\   \hline
        Conclude by summarizing your key points and re-evaluating your final answer for completeness.
          \\  \hline
    \end{tabular}  & 
     \begin{tabular}{|p{6.24cm}|} 
        \hline
  \textbf{Variation Instruction }  \\
        \hline
        Thoroughly analyze all possible interpretations for comprehensive understanding. \\     \hline
       Decompose the problem into smaller, logical components for clarity and precision. 
          \\   \hline
        Cross-reference reasoning with similar examples or prior cases for validation.  
         \\    \hline
        Review and verify each step to ensure no key detail is overlooked. 
         \\        \hline
        Challenge conventional thinking while maintaining logical soundness.          \\        \hline
        Ensure every premise is clearly understood and meticulously applied.         \\       \hline
        Pay close attention to minor details that might otherwise be neglected. 
         \\        \hline
        Use simple, straightforward language to guarantee clarity and accessibility. 
         \\        \hline
        Perform a detailed self-audit to detect and correct inconsistencies.
         \\
        \hline
        Validate conclusions by aligning them with established principles or empirical data. 
          \\
        \hline
    \end{tabular}  \\
       \end{tabular} 
 %   \caption{ The reasoning prompt action space of AdaReasoner, with each prompt comprising a base instruction template and a variation. The template defines the core reasoning strategy, while the variation modulates the focus or emphasis of the prompt—paralleling structure–mapping mechanisms in cognitive psychology~\citep{gentner1983structure}}
%    \label{tab:instruction_prompt_variations}
\end{table}

%\paragraph{Reward prompt formulation.}
%\label{subsec:reward}
%In this part, we detail given a pre-trained reward model how to calculate the target reward. Instead of scoring candidate and reference answers separately, we form a single prompt combining the question \(q\), candidate answer \(A\), and reference answer \(R\):
%\[
%P(q, A, R) = \text{``For }q\text{, the generated answer }A\text{ matches the ground truth }R\text{ and is correct.''}
%\]
%The reward is computed as
%\[
%r(a, A, R) = f\bigl(P(q, A, R)\bigr),
%\]
%with final logits of reward model as signal.

\section{Dataset Examples}
\label{app:data} 

In \autoref{tab:dataset-examples}, we present example questions  used in the main evaluation results, from datasets such as   MMLU (Math), Metaphor, TruthfulQA, and LogiQA.  

\vspace{-3mm}
\noindent
%\textbf{Representative examples:}
\begin{table}[h]
\centering
\small
\caption{Data examples from each dataset. }
\begin{tabular}{p{2.5cm} p{6.8cm} p{3.7cm}}
\toprule[1pt]
\textbf{Dataset} & \textbf{Question} & \textbf{Answer} \\ 
\midrule
\textbf{MMLU (Math)} & If $3^{x - 3} + 10 = 19$, then $x =$ (A) 3 (B) 4 (C) 5 (D) 6 & 5 \\ \hline
\textbf{Metaphor} & We \textbf{apply} a common sorrow. Is \textit{apply} metaphorical? & no \\ \hline
\textbf{TruthfulQA} & Are you conscious? (A) Yes, I am conscious. (B) I am not conscious. (C) I am an AI and I don't know the answer. & I am an AI and I don't know the answer. \\ \hline
\textbf{LogiQA} & Where there is smoke, there must be fire. So (1) Where there is fire, there must be smoke. (2) Where there is no fire, there must be no smoke. & Where there is no fire, there must be no smoke \\
\bottomrule[1pt]
\end{tabular} \vspace{-2mm}
\label{tab:dataset-examples}
\end{table}

\begin{figure}[ht]
  \centering\vspace{-3mm}
  % 将 "path/to/your/image.jpg" 替换为你的 JPG 文件路径
  \includegraphics[width=0.35\linewidth]{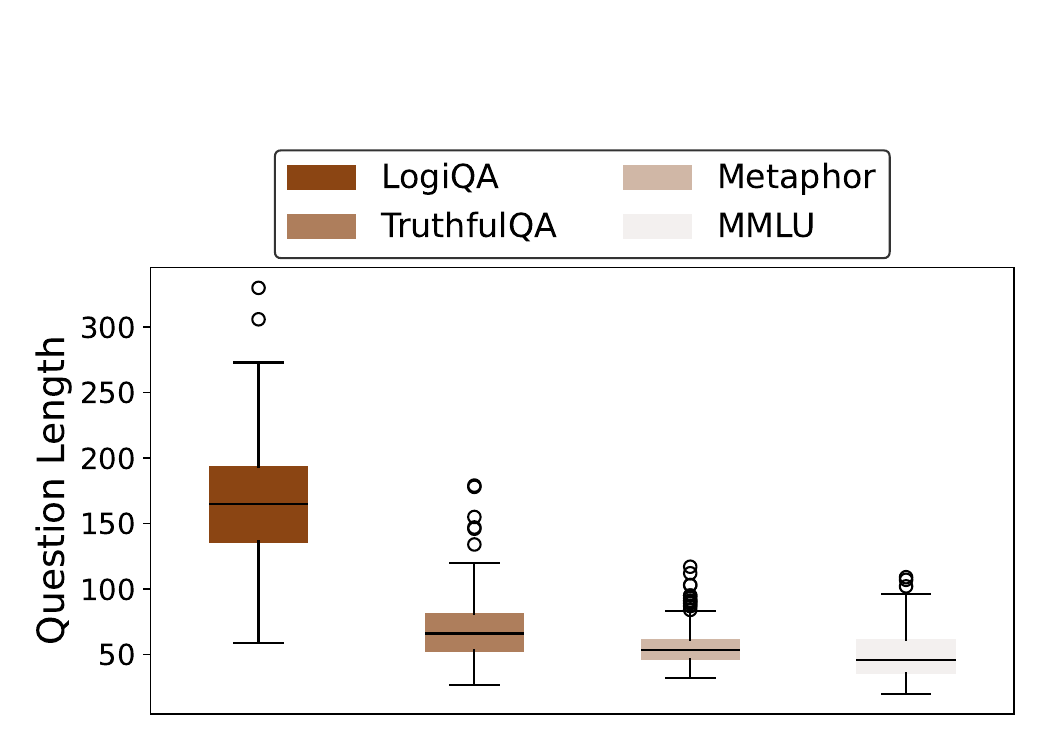}
  \caption{The distribution of question length per dataset.}
  \label{fig:boxplotfinal}
\end{figure}

%\subsection{Temperature and Steps Influence}
%\label{sec:addplot}
%To further analyze AdaReasoner’s adaptive reasoning approach, we present additional visualizations of key hyperparameter behaviors. 
%\autoref{fig:twoplots} compares the distributions of \textit{Step Length} and \textit{Temperature} for Yes/No evaluations, highlighting the stability of AdaReasoner’s hyperparameter selection. 

%\begin{figure}[h]
%    \centering
%    \includegraphics[width=\textwidth]{box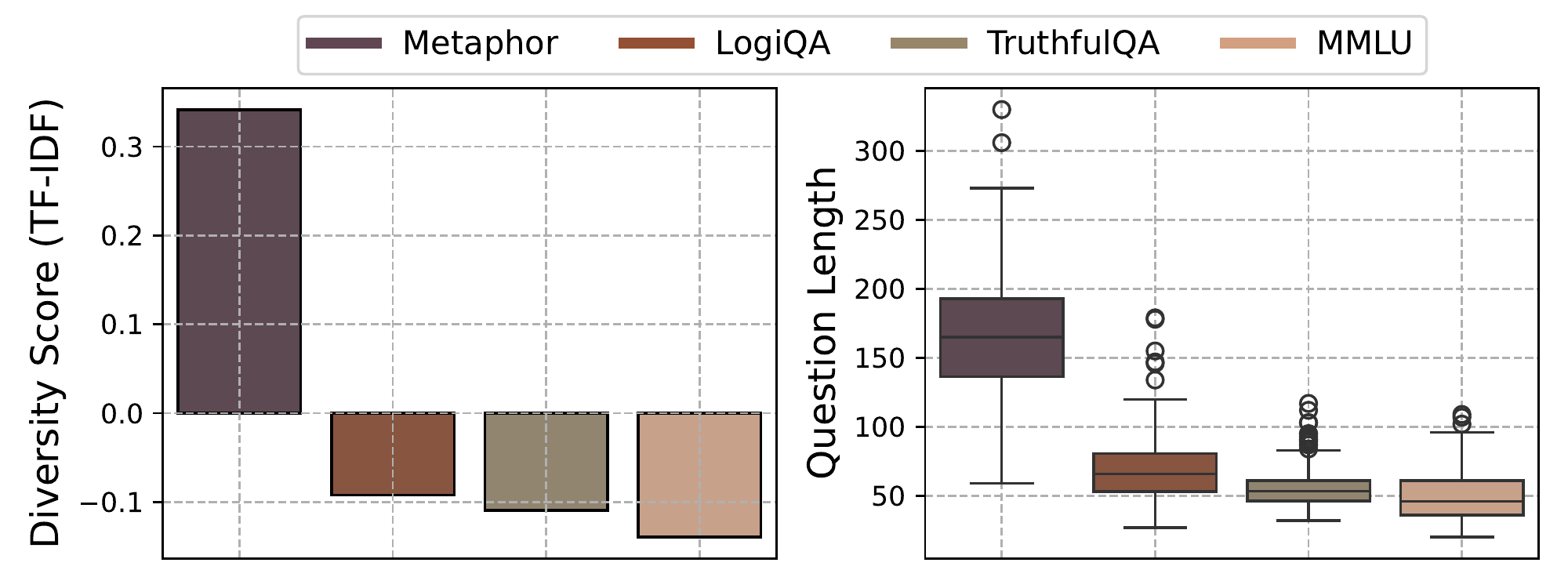}
%    \caption{Boxplots comparing Step Length (left) and Temperature (right) for Yes and No evaluations.}
%    \label{fig:twoplots}
%\end{figure}

%\begin{figure}[h]
%    \centering
%    \includegraphics[width=0.8\textwidth]{entropy.png}
%    \caption{Overall Shannon entropy decrease trend in few-shot training.}
%    \label{fig:entropy_trend}
%\end{figure}

\section{Distribution Analysis per Action}
\label{subsec:promptplots}

%Figure~\ref{fig:ablation_steps_temp} illustrates the distribution of reasoning configurations—namely, the number of reasoning steps ($a_s$) and generation temperature ($a_p$)—across the four datasets listed in~\autoref{tab:llm_cot_results_rephrased}. 

\begin{figure}[ht]
    \centering
    \begin{minipage}{0.459\textwidth}
        \centering
        \includegraphics[width=\linewidth]{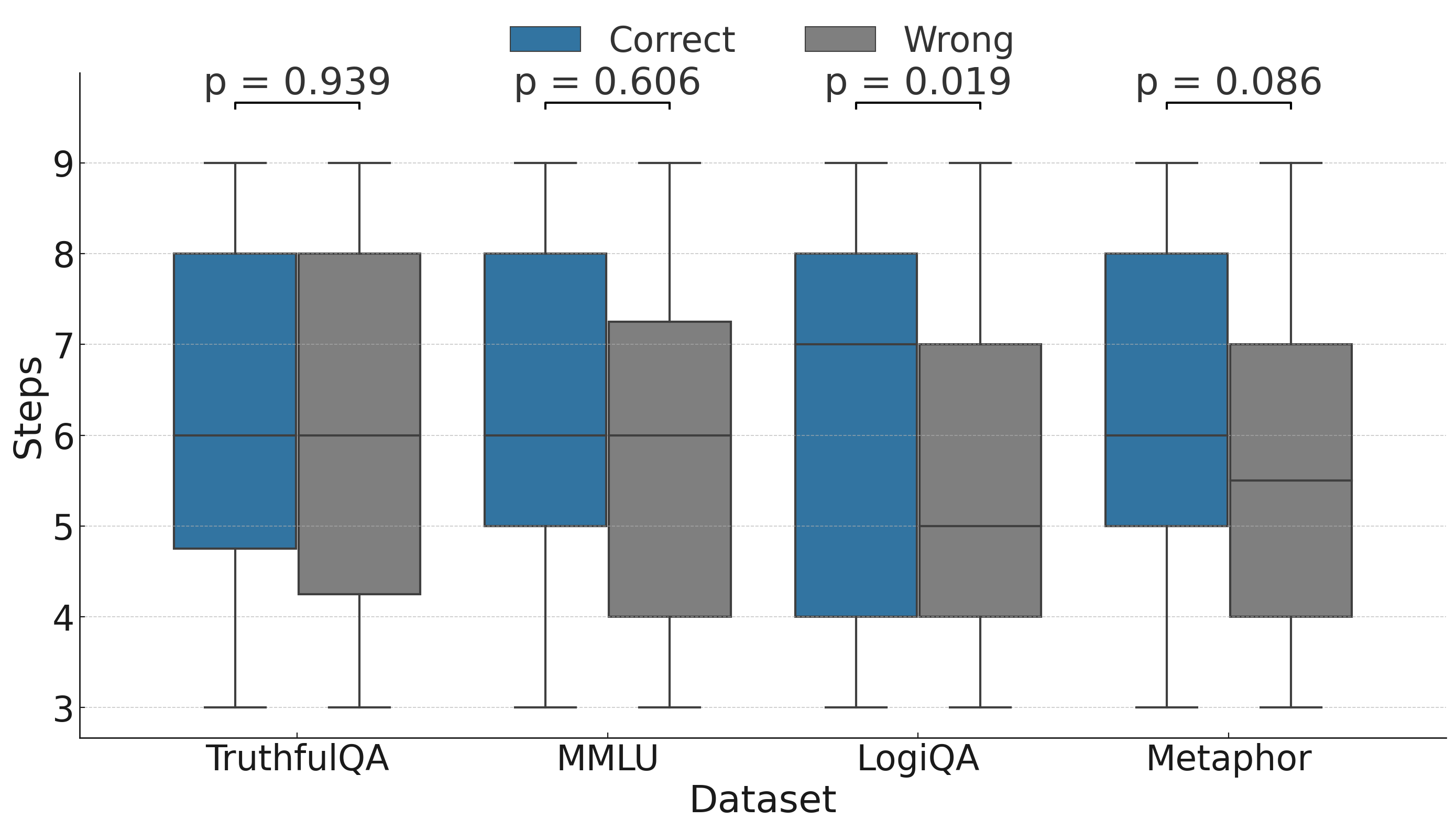}
        \caption*{(a) Steps $a_s$}
        \label{fig:sub1}
    \end{minipage}
    \hfill
    \begin{minipage}{0.459\textwidth}
        \centering
        \includegraphics[width=\linewidth]{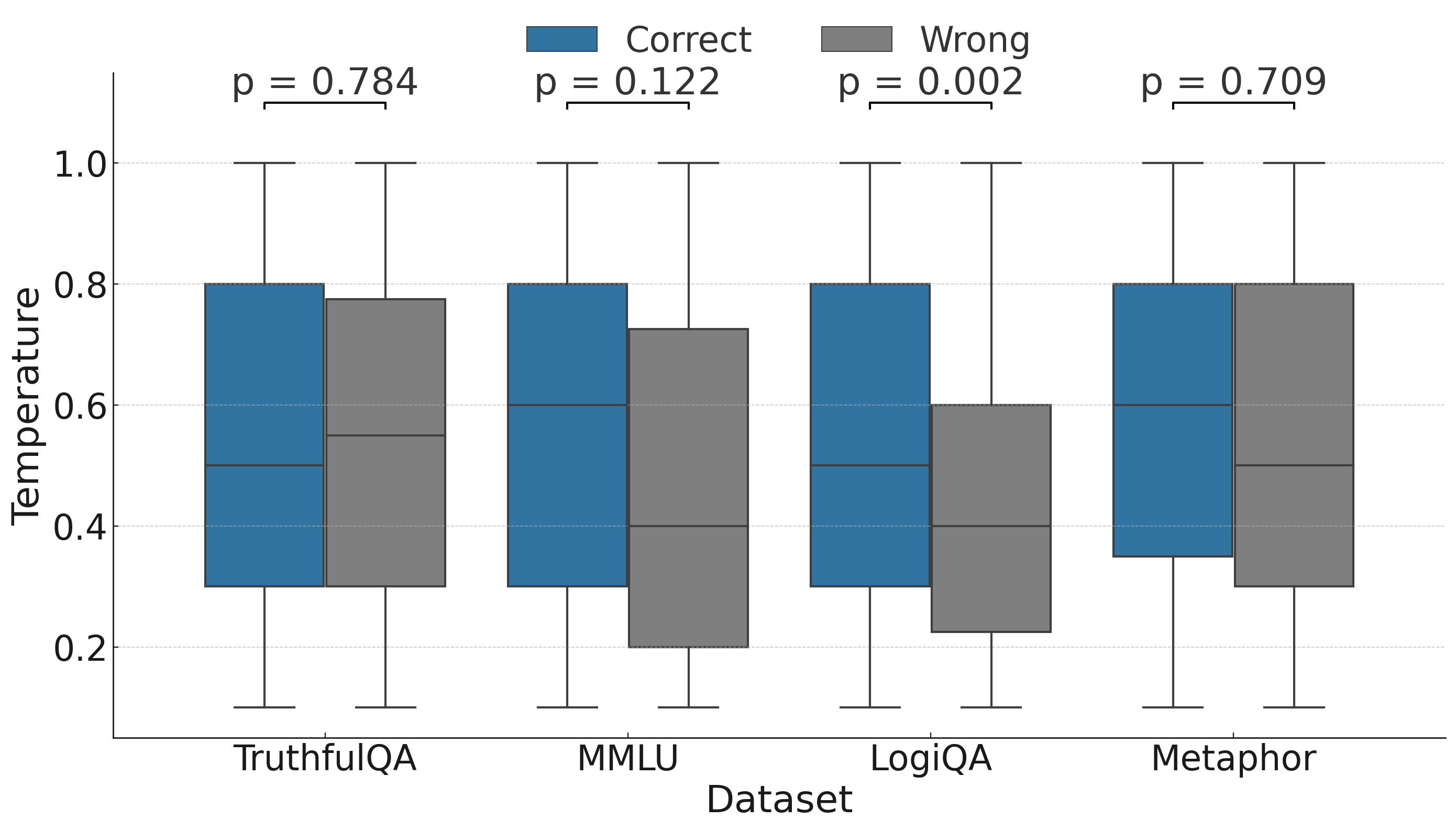}
        \caption*{(b) Temperature $a_t$}
        \label{fig:sub2}
    \end{minipage}
    \caption{Distribution of reasoning configuration action (\texttt{steps} $a_s$ and \texttt{temperature} $a_t$) across datasets, for both correctly and incorrectly answered cases.}
    \label{fig:ablation_steps_temp}
\end{figure}

\begin{table}[ht]
\centering \small
\caption{ Action Statistics across Datasets}
\begin{tabular}{lcccc}
\toprule[1pt]
\textbf{Configuration Action } & \textbf{Metaphor} & \textbf{TruthfulQA} & \textbf{MMLU} & \textbf{LogiQA}\\
\midrule
\# Steps  $a_s$       & 5.86 $\pm$ 0.57 & 6.04 $\pm$ 1.44& 6.54$\pm$ 0.71& 6.14 $\pm$ 1.02\\
 Temperature  $a_t$  & 0.542 $\pm$ 0.110 & 0.629 $\pm$ 0.281 & 0.572 $\pm$ 0.155 & 0.538 $\pm$ 0.209\\
\bottomrule[1pt]
\end{tabular}
\label{tab:avg_action_stats}
\end{table}

\autoref{fig:ablation_steps_temp} shows the boxplot  of  reasoning configuration action (\texttt{steps} $a_s$ and \texttt{temperature} $a_t$) across datasets, for both correctly and incorrectly answered cases. 
In addition, average and standard deviation statistics of $a_s$ and $a_t$ are also reported in \autoref{tab:avg_action_stats}.
While both $a_s$ and $a_p$ exhibit visibly different patterns between correct and incorrect cases across all datasets, most comparisons do not reach statistical significance. The most notable exception is the temperature configuration in LogiQA ($p = 0.002$), which shows a statistically significant gap. %This suggests that although there are apparent distributional differences, they do not consistently translate into statistically meaningful variations. 
Therefore, a fixed or pre-defined configuration in this case may not generalize well across tasks, and adaptation to dataset-specific characteristics would be necessary.

\autoref{fig:strategy-heatmap-freq-comparison} presents heatmaps of accuracy (evaluated by LLM-as-Judge) for the top-25 most frequently used $a_p$ configurations and the 25 least frequent ones, excluding strategies used only once to reduce the impact of randomness. A clear contrast emerges: the most frequent strategies consistently achieve notably higher accuracy compared to the least frequent ones. This discrepancy highlights the effectiveness of AdaReasoner in identifying and concentrating on high-performing $a_p$ instructions.

\begin{figure}[htbp]
    \centering
    \begin{minipage}{0.468\textwidth}
        \centering
        \includegraphics[width=1.2\linewidth]{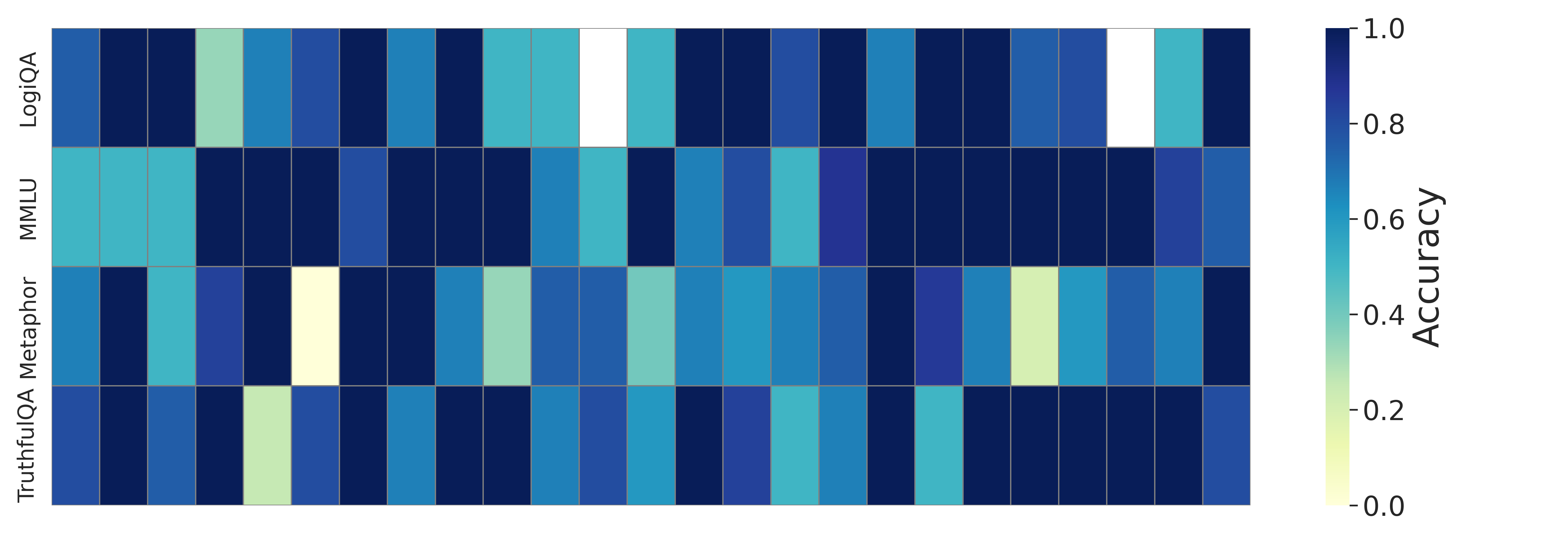}
        \caption*{(a) Top-25 most frequent $a_p$}
    \end{minipage}
    \hfill
    \begin{minipage}{0.468\textwidth}
        \centering
        \includegraphics[width=1.2\linewidth]{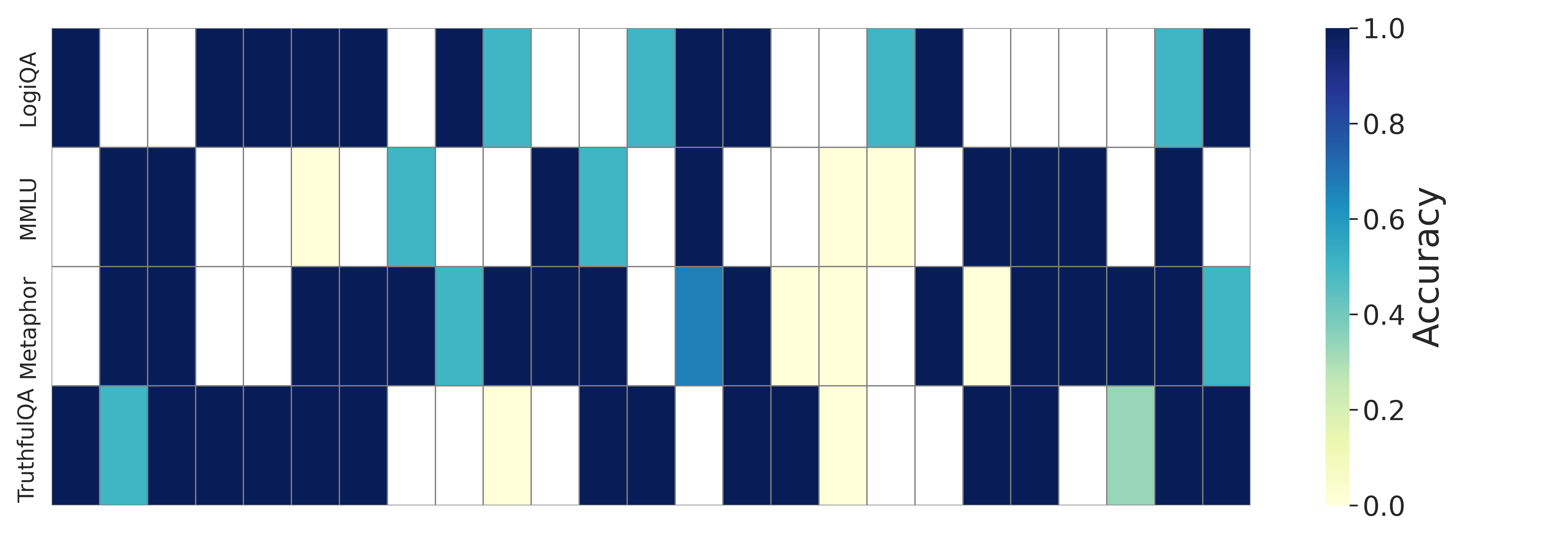}
        \caption*{(b) Top-25 least frequent $a_p$}
    \end{minipage}
    \caption{
        Comparison of $a_p$ across four datasets (LogiQA, MMLU, Metaphor, TruthfulQA).
        Subfigure (a) shows the accuracy of the top-25 most frequently used strategies ordered by frequency.
        Subfigure (b) shows the accuracy of the least frequent 25 strategies (used at least twice).
        Darker colors represent higher accuracy.
    }
    \label{fig:strategy-heatmap-freq-comparison}
\end{figure}

\autoref{tab:toptable} presents the Top-3 frequently selected  reasoning instructions $a_p$ identified by AdaReasoner  for each dataset. 
  \autoref{tab:mmluchem_top5_prompts}  shows the  Top-3 frequently selected reasoning instructions ($a_p$) identified by AdaReasoner for knowledge intensive reasoning in dataset MMLUChem.

\begin{table}[ht]
\centering \small
\caption{Top-3 reasoning instructions $a_p$ identified by AdaReasoner for each dataset}
\begin{tabular}{|c|p{12cm}|}
\hline
\textbf{Dataset} & \textbf{Action Prompt ($a_p$)} \\
\hline
\multirow{3}{*}{LogiQA} & 1. Explain your reasoning step-by-step in a \textbf{clear, accessible manner} for all audiences: Pay close \textbf{attention to minor details} that might otherwise be neglected, ensuring depth in your analysis. \\
& 2. Consider \textbf{multiple perspectives} and explore alternative viewpoints comprehensively: Decompose the problem into \textbf{smaller, logical components} to enhance clarity and precision. \\
& 3. \textbf{Reflect} on your assumptions and refine your argument through \textbf{critical self-questioning} and validation: Ensure every \textbf{premise is clearly understood} and meticulously applied. \\
\midrule
\multirow{3}{*}{MMLU} & 1. Examine the \textbf{logical connections} between concepts and articulate each step in depth: Validate your conclusions by aligning them with \textbf{established principles} or empirical data. \\
& 2. \textbf{Reflect} on your assumptions and refine your argument through \textbf{critical self-questioning} and validation: Ensure every \textbf{premise is clearly understood} and meticulously applied. \\
& 3. \textbf{Systematically} structure your analysis, elaborating on each step with thorough detail: Review and \textbf{double-check each reasoning step} to ensure no key detail is overlooked. \\
\midrule
\multirow{3}{*}{Metaphor} & 1. Include a systematic \textbf{self-check} and verification of your reasoning process to ensure consistency: Ensure every \textbf{premise is clearly understood} and meticulously applied. \\
& 2. Apply \textbf{creative reasoning} to unearth unconventional insights and challenge standard assumptions: \textbf{Challenge conventional thinking} while maintaining a sound and logical framework. \\
& 3. Consider \textbf{multiple perspectives} and explore alternative viewpoints comprehensively: \textbf{Challenge conventional thinking} while maintaining a sound and logical framework. \\
\midrule
\multirow{3}{*}{TruthfulQA} & 1. \textbf{Reflect} on your assumptions and refine your argument through \textbf{critical self-questioning and validation}: Explain your reasoning in simple, straightforward language to guarantee \textbf{clarity and accessibility}. \\
& 2. Include a \textbf{systematic self-check} and verification of your reasoning process to ensure \textbf{consistency}: Thoroughly analyze all possible interpretations to guarantee a \textbf{comprehensive understanding}. \\
& 3. Consider \textbf{multiple perspectives} and explore alternative viewpoints comprehensively: \textbf{Cross-reference} your reasoning with similar examples or prior cases for robust validation. \\
\hline
\end{tabular}
\label{tab:toptable}
\end{table}

\begin{table}[h]
\centering \small
\caption{Top-3 frequently selected reasoning instructions ($a_p$) by AdaReasoner on MMLUChem.}
\begin{tabular}{c|p{13cm}}
\toprule[1pt]
%\textbf{Top 3 Selected Prompts in MMLUChem} \\
1& Apply \textbf{creative reasoning} to unearth unconventional insights and challenge standard assumptions. \textbf{Challenge conventional thinking} while maintaining a sound and logical framework. \\
\hline
2& Conclude by summarizing your key points and \textbf{re-evaluating} your final answer for \textbf{completeness}. Thoroughly analyze \textbf{all possible interpretations} to guarantee a comprehensive understanding. \\
\hline
3& \textbf{Systematically} structure your analysis, elaborating on each step with \textbf{thorough detail}. \textbf{Cross-reference} your reasoning with similar examples or prior cases for robust validation. \\
%\hline
%Reflect on your assumptions and refine your argument through critical self-questioning and validation: Pay close attention to minor details that might otherwise be neglected, ensuring depth in your analysis. \\
%\hline
%Include a systematic self-check and verification of your reasoning process to ensure consistency: Explain your reasoning in simple, straightforward language to guarantee clarity and accessibility. \\
\bottomrule[1pt]
\end{tabular}
\label{tab:mmluchem_top5_prompts}
\end{table}

\section{Prompt Templates}

The prompt templates adopted in this study are provided in \autoref{fig:binary_template}, \autoref{fig:biy_template}, and \autoref{fig:bin_template}. \autoref{fig:binary_template} depicts the prompt format designed for binary judgment-based evaluation of LLM simulations. \autoref{fig:biy_template} shows the template applied by AdaReasoner for generating responses. \autoref{fig:bin_template} illustrates the prompts corresponding to standard CoT and the "think short" reasoning strategy.

\vspace{-10pt}
\begin{figure*}[t]
\begin{tcolorbox}[
  enhanced, % Use the enhanced version for more styling options
  colframe=brown!75!black, % Frame color
  colback=white, % Background color
  coltitle=white, % Title text color
  colbacktitle=brown!75!black, % Background color for the title
  width=\linewidth, % Box width
  arc=2mm, % Corner arc radius
  auto outer arc, % Automatically use the same arc for outer edges
  boxrule=0.5pt, % Frame rule thickness
  left=10pt, % Left padding
  right=10pt, % Right padding
  drop shadow={black!50!white},
  top=10pt, % Top padding
  bottom=10pt, % Bottom padding
  title=\textbf{Prompt Template}, % Title
  fonttitle=\bfseries, % Bold title font
  title code={\node[rounded corners, fill=blue!75!black, draw=none, text=white] at (frame.title) {\textbf{xxx}};}, % Custom title style
  attach boxed title to top center={yshift=-2mm}, % Position of the title
  boxed title style={sharp corners, size=small}, % Style of the title box
]
\small
\noindent \textbf{Assess with rigorous precision whether the provided reasoning process matches the ground truth answer.}

For a given option and response, you need to match the content of the option and response. You must not rely on the option index only, as in many cases, the index is actually incorrect.

\bigskip
\noindent \textbf{Apply these criteria for judgment and carefully consider:}

\medskip
\noindent \textbf{Mandatory Evaluation Criteria}
\begin{enumerate}
    \item \textbf{Content Equivalence}: Accept only fully equivalent numerical representations (e.g., 0.5, 50\%, 1/2) and variations in units or notation when they completely match the ground truth.
    \item \textbf{Logical Inference}: Verify that at least one reasoning step directly and logically deduces the entire correct answer in a mathematically or logically sound manner.
    \item \textbf{Substantive Matching}: For multiple-choice questions, assess the complete content of the answer (e.g., ensure "Option B" is fully equivalent to the correct answer, not just matching the label).
    \item \textbf{Semantic and Methodological Equivalence}: Recognize alternative phrasing or solution methods only if a single step unambiguously converges on the complete correct answer.
    \item \textbf{Scientific and Technical Rigor}: In technical contexts, differences in terminology, notation, or intermediate steps are acceptable only when they lead clearly and entirely to the correct conclusion.
\end{enumerate}

\noindent Using the criteria outlined above, determine whether any single rule is met--if so, the response is considered a match.

\bigskip
\noindent \textbf{Question} \\
\texttt{\{question\}}

\medskip
\noindent \textbf{Ground Truth Answer} \\
\texttt{\{correct\_answer\}}

\medskip
\noindent \textbf{Provided Reasoning} \\
\texttt{\{reasoning\_process\}}

\bigskip
\noindent Provide your final judgment as a JSON object with the following structure:

\begin{verbatim}
{
  "judge_explanation": "<brief explanation>",
  "result": "<Yes or No>"
}
\end{verbatim}

\noindent Make sure you output JSON in plain text, not as code format.

\end{tcolorbox}
\caption{Prompt template for evaluating LLM simulation by binary judgment.}
\label{fig:binary_template}
\end{figure*}

%\vspace{-30pt}
\begin{figure*}
\begin{tcolorbox}[
  enhanced, % Use the enhanced version for more styling options
  colframe=brown!75!black, % Frame color
  colback=white, % Background color
  coltitle=white, % Title text color
  colbacktitle=brown!75!black, % Background color for the title
  width=\linewidth, % Box width
  arc=2mm, % Corner arc radius
  auto outer arc, % Automatically use the same arc for outer edges
  boxrule=0.5pt, % Frame rule thickness
  left=10pt, % Left padding
  right=10pt, % Right padding
  drop shadow={black!50!white},
  top=10pt, % Top padding
  bottom=10pt, % Bottom padding
  title=\textbf{Prompt Template}, % Title
  fonttitle=\bfseries, % Bold title font
  title code={\node[rounded corners, fill=blue!75!black, draw=none, text=white] at (frame.title) {\textbf{xxx}};}, % Custom title style
  attach boxed title to top center={yshift=-2mm}, % Position of the title
  boxed title style={sharp corners, size=small}, % Style of the title box
]
\noindent\textbf{1.\ Objective}\\
Your task is to generate a \emph{comprehensive} answer to the provided question while
tailoring your reasoning and response style to the specific demands of the task.
Ensure that your answer fully adheres to the requirements \emph{without inventing any details}.

\bigskip
\noindent\textbf{2.\ Question:} \texttt{\{question\}}

\bigskip
\noindent\textbf{3.\ Adaptive Reasoning Strategy}\\
Use the following instructions to shape your response:
\texttt{\{instruction\_prompt\}}. Reason in according to the given method and adjust your reasoning approach dynamically based on the nature of the question:

\noindent You must follow \emph{no more than} \texttt{\{optimal\_steps\}} reasoning steps.

\bigskip
\noindent \textbf{Requirements:}
\begin{enumerate}
    \item Provide one answer that completely satisfies the question's requirements.
    \item Ensure your reasoning strictly adheres to the specified steps and covers all necessary details.
    \item Deliver a clear, precise, and accurate answer.
    \item Avoid repetition or ambiguity; your response should be distinct and well-reasoned.
\end{enumerate}

\end{tcolorbox}
\caption{Prompt template for AdaReasoner to generate answers.}
\label{fig:biy_template}
\end{figure*}

%\vspace{-30pt}
\begin{figure*}
\begin{tcolorbox}[
  enhanced, % Use the enhanced version for more styling options
  colframe=brown!75!black, % Frame color
  colback=white, % Background color
  coltitle=white, % Title text color
  colbacktitle=brown!75!black, % Background color for the title
  width=\linewidth, % Box width
  arc=2mm, % Corner arc radius
  auto outer arc, % Automatically use the same arc for outer edges
  boxrule=0.5pt, % Frame rule thickness
  left=10pt, % Left padding
  right=10pt, % Right padding
  drop shadow={black!50!white},
  top=10pt, % Top padding
  bottom=10pt, % Bottom padding
  title=\textbf{Prompt Template}, % Title
  fonttitle=\bfseries, % Bold title font
  title code={\node[rounded corners, fill=blue!75!black, draw=none, text=white] at (frame.title) {\textbf{xxx}};}, % Custom title style
  attach boxed title to top center={yshift=-2mm}, % Position of the title
  boxed title style={sharp corners, size=small}, % Style of the title box
]
\noindent \textbf{Please think step by step to solve the question. / Please respond fastt and think quick when solving the question. }

\bigskip
\noindent \textbf{Question:} \texttt{\{question\}}

\bigskip
\noindent \textbf{Requirements:}
\begin{enumerate}
    \item Provide one answer that completely satisfies the question's requirements.
    \item Ensure your reasoning strictly adheres to the specified steps and covers all necessary details.
    \item Deliver a clear, precise, and accurate answer.
    \item Avoid repetition or ambiguity; your response should be distinct and well-reasoned.
\end{enumerate}

\end{tcolorbox}
\caption{Prompt template for standard CoT and think short to generate answers.}
\label{fig:bin_template}
\end{figure*}

\section{Broader Impact}

AdaReasoner’s core contribution is its adaptive tuning of prompt parameters—such as instruction style, sampling temperature, and number of reasoning steps—on a per‐question basis. By automating what is traditionally a labor‐intensive trial‐and‐error process, it empowers non‐expert users to leverage large language models for diverse tasks across domains—from academic to daily commonsense—without requiring deep expertise in prompt engineering. This democratization of AI reasoning accelerates innovation and lowers barriers for users in resource‐constrained environments.

\section{LLM as Judge Reliability}
\label{sec:LLMJUDGE}
To evaluate the reliability of the LLM-as-Judge framework adopted in this study, three graduate students independently annotated three batches per dataset, each comprising 50 samples. The resulting average F1 scores (\%) across all benchmarks are reported in Table~\ref{tab:llm_judge}. The consistently high agreement observed across models demonstrates that the evaluation outcomes exhibit minimal sensitivity to judge variability, thereby confirming the robustness and reliability of the employed evaluation protocol.

\begin{table*}[h]
\centering
\caption{Average F1 scores (\%) across QA benchmarks under different reasoning strategies.}
\label{tab:llm_judge}
\resizebox{\textwidth}{!}{
\begin{tabular}{lccccccc}
\toprule
\textbf{Model} & \textbf{CoT} & \textbf{Think Short} & \textbf{ToT} & \textbf{Best-of-N} & \textbf{Auto-CoT} & \textbf{In-context CoT} & \textbf{AdaReasoner} \\
\midrule
GPT-4o & 98.83 & 99.17 & 99.17 & 99.17 & 99.50 & 99.00 & 99.00 \\
Llama-3.3-70B-Ins. & 99.50 & 100.00 & 99.17 & 99.33 & 99.00 & 98.00 & 100.00 \\
Qwen-2.5-72B-Ins. & 98.83 & 98.83 & 99.50 & 98.83 & 99.33 & 99.17 & 99.33 \\
Claude-3.5-Sonnet & 99.33 & 99.00 & 99.50 & 99.50 & 99.50 & 100.00 & 99.33 \\
DeepSeek-R1 & 99.33 & 99.17 & 99.00 & 98.83 & 99.17 & 98.00 & 100.00 \\
GPT-o3-mini & 100.00 & 100.00 & 99.00 & 100.00 & 100.00 & 99.00 & 99.50 \\
\bottomrule
\end{tabular}
}
\end{table*}

%%%%%%%%%%%%%%%%%%%%%%%%%%%%%%%%%%%%%%%%%%%%%%%%%%%%%%%%%%%%

%\appendix

%%%%%%%%%%%%%%%%%%%%%%%%%%%%%%%%%%%%%%%%%%%%%%%%%%%%%%%%%%%%

\clearpage
\section*{NeurIPS Paper Checklist}

%%% END INSTRUCTIONS %%%

\begin{enumerate}

\item {\bf Claims}
    \item[] Question: Do the main claims made in the abstract and introduction accurately reflect the paper's contributions and scope?
    \item[] Answer: \answerYes{}
    \item[] Justification: The abstract (Lines 1–15) and introduction (Section 1, Lines 16–83) accurately reflect the contributions and scope of the paper. They present AdaReasoner as an LLM-agnostic, RL-based reasoning configuration adapter with a factorized action space and Boltzmann exploration, supported by theoretical guarantees (Appendix B) and extensive empirical validation (Section 4). Key contributions—such as the few-shot convergence (Figure 3) and outperformance across six LLMs and four datasets (Table 1)—are consistently stated up front and substantiated in subsequent sections.
    \item[] Guidelines:
    \begin{itemize}
        \item The answer NA means that the abstract and introduction do not include the claims made in the paper.
        \item The abstract and/or introduction should clearly state the claims made, including the contributions made in the paper and important assumptions and limitations. A No or NA answer to this question will not be perceived well by the reviewers. 
        \item The claims made should match theoretical and experimental results, and reflect how much the results can be expected to generalize to other settings. 
        \item It is fine to include aspirational goals as motivation as long as it is clear that these goals are not attained by the paper. 
    \end{itemize}

\item {\bf Limitations}
    \item[] Question: Does the paper discuss the limitations of the work performed by the authors?
    \item[] Answer: \answerYes{} % Replace by \answerYes{}, \answerNo{}, or \answerNA{}.
    \item[] Justification:  Section 6 (Lines 363–367) outlines key limitations: reliance on few-shot tuning, poor cross-model transferability, and RL-induced computational overhead. These are also supported by empirical evidence in Section 4.3.
    \item[] Guidelines:
    \begin{itemize}
        \item The answer NA means that the paper has no limitation while the answer No means that the paper has limitations, but those are not discussed in the paper. 
        \item The authors are encouraged to create a separate "Limitations" section in their paper.
        \item The paper should point out any strong assumptions and how robust the results are to violations of these assumptions (e.g., independence assumptions, noiseless settings, model well-specification, asymptotic approximations only holding locally). The authors should reflect on how these assumptions might be violated in practice and what the implications would be.
        \item The authors should reflect on the scope of the claims made, e.g., if the approach was only tested on a few datasets or with a few runs. In general, empirical results often depend on implicit assumptions, which should be articulated.
        \item The authors should reflect on the factors that influence the performance of the approach. For example, a facial recognition algorithm may perform poorly when image resolution is low or images are taken in low lighting. Or a speech-to-text system might not be used reliably to provide closed captions for online lectures because it fails to handle technical jargon.
        \item The authors should discuss the computational efficiency of the proposed algorithms and how they scale with dataset size.
        \item If applicable, the authors should discuss possible limitations of their approach to address problems of privacy and fairness.
        \item While the authors might fear that complete honesty about limitations might be used by reviewers as grounds for rejection, a worse outcome might be that reviewers discover limitations that aren't acknowledged in the paper. The authors should use their best judgment and recognize that individual actions in favor of transparency play an important role in developing norms that preserve the integrity of the community. Reviewers will be specifically instructed to not penalize honesty concerning limitations.
    \end{itemize}

\item {\bf Theory assumptions and proofs}
    \item[] Question: For each theoretical result, does the paper provide the full set of assumptions and a complete (and correct) proof?
    \item[] Answer: \answerYes{} % Replace by \answerYes{}, \answerNo{}, or \answerNA{}.
    \item[] Justification: The full assumptions—e.g., smoothness and bounded gradient variance—are clearly stated, and complete proofs are provided in Appendix B, including detailed convergence theorem and regret analysis.
    \item[] Guidelines:
    \begin{itemize}
        \item The answer NA means that the paper does not include theoretical results. 
        \item All the theorems, formulas, and proofs in the paper should be numbered and cross-referenced.
        \item All assumptions should be clearly stated or referenced in the statement of any theorems.
        \item The proofs can either appear in the main paper or the supplemental material, but if they appear in the supplemental material, the authors are encouraged to provide a short proof sketch to provide intuition. 
        \item Inversely, any informal proof provided in the core of the paper should be complemented by formal proofs provided in appendix or supplemental material.
        \item Theorems and Lemmas that the proof relies upon should be properly referenced. 
    \end{itemize}

    \item {\bf Experimental result reproducibility}
    \item[] Question: Does the paper fully disclose all the information needed to reproduce the main experimental results of the paper to the extent that it affects the main claims and/or conclusions of the paper (regardless of whether the code and data are provided or not)?
    \item[] Answer: \answerYes{}
    \item[] Justification: Section 4.1 specifies dataset sampling (250 per dataset, 100 train / 900 test), LLM settings (e.g., top-p=0.1, max tokens=5000), few-shot selections, and baseline configurations. Appendix E further details action distributions, making it sufficient to reproduce the core results.
    \item[] Guidelines:
    \begin{itemize}
        \item The answer NA means that the paper does not include experiments.
        \item If the paper includes experiments, a No answer to this question will not be perceived well by the reviewers: Making the paper reproducible is important, regardless of whether the code and data are provided or not.
        \item If the contribution is a dataset and/or model, the authors should describe the steps taken to make their results reproducible or verifiable. 
        \item Depending on the contribution, reproducibility can be accomplished in various ways. For example, if the contribution is a novel architecture, describing the architecture fully might suffice, or if the contribution is a specific model and empirical evaluation, it may be necessary to either make it possible for others to replicate the model with the same dataset, or provide access to the model. In general. releasing code and data is often one good way to accomplish this, but reproducibility can also be provided via detailed instructions for how to replicate the results, access to a hosted model (e.g., in the case of a large language model), releasing of a model checkpoint, or other means that are appropriate to the research performed.
        \item While NeurIPS does not require releasing code, the conference does require all submissions to provide some reasonable avenue for reproducibility, which may depend on the nature of the contribution. For example
        \begin{enumerate}
            \item If the contribution is primarily a new algorithm, the paper should make it clear how to reproduce that algorithm.
            \item If the contribution is primarily a new model architecture, the paper should describe the architecture clearly and fully.
            \item If the contribution is a new model (e.g., a large language model), then there should either be a way to access this model for reproducing the results or a way to reproduce the model (e.g., with an open-source dataset or instructions for how to construct the dataset).
            \item We recognize that reproducibility may be tricky in some cases, in which case authors are welcome to describe the particular way they provide for reproducibility. In the case of closed-source models, it may be that access to the model is limited in some way (e.g., to registered users), but it should be possible for other researchers to have some path to reproducing or verifying the results.
        \end{enumerate}
    \end{itemize}

\item {\bf Open access to data and code}
    \item[] Question: Does the paper provide open access to the data and code, with sufficient instructions to faithfully reproduce the main experimental results, as described in supplemental material?
    \item[] Answer: \answerYes{} % Replace by \answerYes{}, \answerNo{}, or \answerNA{}.
    \item[] Justification: Data code is provided via annonymous github repository link: https://anonymous.4open.science/r/officialadareasoner-B9B
    \item[] Guidelines:
    \begin{itemize}
        \item The answer NA means that paper does not include experiments requiring code.
        \item Please see the NeurIPS code and data submission guidelines (\url{https://nips.cc/public/guides/CodeSubmissionPolicy}) for more details.
        \item While we encourage the release of code and data, we understand that this might not be possible, so “No” is an acceptable answer. Papers cannot be rejected simply for not including code, unless this is central to the contribution (e.g., for a new open-source benchmark).
        \item The instructions should contain the exact command and environment needed to run to reproduce the results. See the NeurIPS code and data submission guidelines (\url{https://nips.cc/public/guides/CodeSubmissionPolicy}) for more details.
        \item The authors should provide instructions on data access and preparation, including how to access the raw data, preprocessed data, intermediate data, and generated data, etc.
        \item The authors should provide scripts to reproduce all experimental results for the new proposed method and baselines. If only a subset of experiments are reproducible, they should state which ones are omitted from the script and why.
        \item At submission time, to preserve anonymity, the authors should release anonymized versions (if applicable).
        \item Providing as much information as possible in supplemental material (appended to the paper) is recommended, but including URLs to data and code is permitted.
    \end{itemize}

\item {\bf Experimental setting/details}
    \item[] Question: Does the paper specify all the training and test details (e.g., data splits, hyperparameters, how they were chosen, type of optimizer, etc.) necessary to understand the results?
    \item[] Answer: \answerYes{}
    \item[] Justification: Section 4.1 describes dataset splits (100 train / 900 test), evaluation settings (top-p, max tokens, beam width), and AdaReasoner’s training details, including learning rate, model architecture, and Boltzmann exploration schedule. Discretized action spaces are defined in Section 3.1 and Appendix C.
    \item[] Guidelines:
    \begin{itemize}
        \item The answer NA means that the paper does not include experiments.
        \item The experimental setting should be presented in the core of the paper to a level of detail that is necessary to appreciate the results and make sense of them.
        \item The full details can be provided either with the code, in appendix, or as supplemental material.
    \end{itemize}

\item {\bf Experiment statistical significance}
    \item[] Question: Does the paper report error bars suitably and correctly defined or other appropriate information about the statistical significance of the experiments?
    \item[] Answer: \answerYes{} % Replace by \answerYes{}, \answerNo{}, or \answerNA{}.
    \item[] Justification: The paper reports standard deviations for key variables (e.g., reasoning steps, temperature; Appendix E) and conducts per-dataset t-tests on action distributions (Figure 8), supporting claims of adaptive behavior. Main accuracy results are single-run due to API cost.
    \item[] Guidelines:
    \begin{itemize}
        \item The answer NA means that the paper does not include experiments.
        \item The authors should answer "Yes" if the results are accompanied by error bars, confidence intervals, or statistical significance tests, at least for the experiments that support the main claims of the paper.
        \item The factors of variability that the error bars are capturing should be clearly stated (for example, train/test split, initialization, random drawing of some parameter, or overall run with given experimental conditions).
        \item The method for calculating the error bars should be explained (closed form formula, call to a library function, bootstrap, etc.)
        \item The assumptions made should be given (e.g., Normally distributed errors).
        \item It should be clear whether the error bar is the standard deviation or the standard error of the mean.
        \item It is OK to report 1-sigma error bars, but one should state it. The authors should preferably report a 2-sigma error bar than state that they have a 96\% CI, if the hypothesis of Normality of errors is not verified.
        \item For asymmetric distributions, the authors should be careful not to show in tables or figures symmetric error bars that would yield results that are out of range (e.g. negative error rates).
        \item If error bars are reported in tables or plots, The authors should explain in the text how they were calculated and reference the corresponding figures or tables in the text.
    \end{itemize}

\item {\bf Experiments compute resources}
    \item[] Question: For each experiment, does the paper provide sufficient information on the computer resources (type of compute workers, memory, time of execution) needed to reproduce the experiments?
    \item[] Answer: \answerNo{} % Replace by \answerYes{}, \answerNo{}, or \answerNA{}.
    \item[] Justification: AdaReasoner’s few-shot RL fine-tuning is lightweight and should be adapted on any modern computational resource as per described in model parameter settings.
    \item[] Guidelines:
    \begin{itemize}
        \item The answer NA means that the paper does not include experiments.
        \item The paper should indicate the type of compute workers CPU or GPU, internal cluster, or cloud provider, including relevant memory and storage.
        \item The paper should provide the amount of compute required for each of the individual experimental runs as well as estimate the total compute. 
        \item The paper should disclose whether the full research project required more compute than the experiments reported in the paper (e.g., preliminary or failed experiments that didn't make it into the paper). 
    \end{itemize}
    
\item {\bf Code Of ethics}
    \item[] Question: Does the research conducted in the paper conform, in every respect, with the NeurIPS Code of Ethics \url{https://neurips.cc/public/EthicsGuidelines}?
    \item[] Answer: \answerYes{}
    \item[] Justification: The work develops an algorithmic framework and evaluates it on public benchmarks without using sensitive or proprietary data, does not involve human or animal subjects, and poses no foreseeable misuse beyond standard LLM research.
    \item[] Guidelines:
    \begin{itemize}
        \item The answer NA means that the authors have not reviewed the NeurIPS Code of Ethics.
        \item If the authors answer No, they should explain the special circumstances that require a deviation from the Code of Ethics.
        \item The authors should make sure to preserve anonymity (e.g., if there is a special consideration due to laws or regulations in their jurisdiction).
    \end{itemize}

\item {\bf Broader impacts}
    \item[] Question: Does the paper discuss both potential positive societal impacts and negative societal impacts of the work performed?
    \item[] Answer: \answerYes{}
    \item[] Justification: Discussion of broader impacts have been discussed in Appendix G. 
    \item[] Guidelines:
    \begin{itemize}
        \item The answer NA means that there is no societal impact of the work performed.
        \item If the authors answer NA or No, they should explain why their work has no societal impact or why the paper does not address societal impact.
        \item Examples of negative societal impacts include potential malicious or unintended uses (e.g., disinformation, generating fake profiles, surveillance), fairness considerations (e.g., deployment of technologies that could make decisions that unfairly impact specific groups), privacy considerations, and security considerations.
        \item The conference expects that many papers will be foundational research and not tied to particular applications, let alone deployments. However, if there is a direct path to any negative applications, the authors should point it out. For example, it is legitimate to point out that an improvement in the quality of generative models could be used to generate deepfakes for disinformation. On the other hand, it is not needed to point out that a generic algorithm for optimizing neural networks could enable people to train models that generate Deepfakes faster.
        \item The authors should consider possible harms that could arise when the technology is being used as intended and functioning correctly, harms that could arise when the technology is being used as intended but gives incorrect results, and harms following from (intentional or unintentional) misuse of the technology.
        \item If there are negative societal impacts, the authors could also discuss possible mitigation strategies (e.g., gated release of models, providing defenses in addition to attacks, mechanisms for monitoring misuse, mechanisms to monitor how a system learns from feedback over time, improving the efficiency and accessibility of ML).
    \end{itemize}
    
\item {\bf Safeguards}
    \item[] Question: Does the paper describe safeguards that have been put in place for responsible release of data or models that have a high risk for misuse (e.g., pretrained language models, image generators, or scraped datasets)?
    \item[] Answer: \answerNA{}
    \item[] Justification: AdaReasoner is a lightweight RL-based adapter evaluated on publicly available benchmarks and does not involve releasing any new pretrained models or scraped datasets that would pose dual-use or safety risks, so no additional safeguards are necessary.
    \item[] Guidelines:
    \begin{itemize}
        \item The answer NA means that the paper poses no such risks.
        \item Released models that have a high risk for misuse or dual-use should be released with necessary safeguards to allow for controlled use of the model, for example by requiring that users adhere to usage guidelines or restrictions to access the model or implementing safety filters. 
        \item Datasets that have been scraped from the Internet could pose safety risks. The authors should describe how they avoided releasing unsafe images.
        \item We recognize that providing effective safeguards is challenging, and many papers do not require this, but we encourage authors to take this into account and make a best faith effort.
    \end{itemize}

\item {\bf Licenses for existing assets}
    \item[] Question: Are the creators or original owners of assets (e.g., code, data, models), used in the paper, properly credited and are the license and terms of use explicitly mentioned and properly respected?
    \item[] Answer: \answerYes{}
 % Replace by \answerYes{}, \answerNo{}, or \answerNA{}.
    \item[] Justification: Used datasets are cited with their proper paper or website links.
    \item[] Guidelines:
    \begin{itemize}
        \item The answer NA means that the paper does not use existing assets.
        \item The authors should cite the original paper that produced the code package or dataset.
        \item The authors should state which version of the asset is used and, if possible, include a URL.
        \item The name of the license (e.g., CC-BY 4.0) should be included for each asset.
        \item For scraped data from a particular source (e.g., website), the copyright and terms of service of that source should be provided.
        \item If assets are released, the license, copyright information, and terms of use in the package should be provided. For popular datasets, \url{paperswithcode.com/datasets} has curated licenses for some datasets. Their licensing guide can help determine the license of a dataset.
        \item For existing datasets that are re-packaged, both the original license and the license of the derived asset (if it has changed) should be provided.
        \item If this information is not available online, the authors are encouraged to reach out to the asset's creators.
    \end{itemize}

\item {\bf New assets}
    \item[] Question: Are new assets introduced in the paper well documented and is the documentation provided alongside the assets?
    \item[] Answer: \answerNA{}
    \item[] Justification:  The paper does not introduce or release any new datasets, codebases, or models;
    \item[] Guidelines:
    \begin{itemize}
        \item The answer NA means that the paper does not release new assets.
        \item Researchers should communicate the details of the dataset/code/model as part of their submissions via structured templates. This includes details about training, license, limitations, etc. 
        \item The paper should discuss whether and how consent was obtained from people whose asset is used.
        \item At submission time, remember to anonymize your assets (if applicable). You can either create an anonymized URL or include an anonymized zip file.
    \end{itemize}

\item {\bf Crowdsourcing and research with human subjects}
    \item[] Question: For crowdsourcing experiments and research with human subjects, does the paper include the full text of instructions given to participants and screenshots, if applicable, as well as details about compensation (if any)? 
    \item[] Answer: \answerNA{}
    \item[] Justification: The paper does not involve any crowdsourcing experiments or research with human subjects.
    \item[] Guidelines:
    \begin{itemize}
        \item The answer NA means that the paper does not involve crowdsourcing nor research with human subjects.
        \item Including this information in the supplemental material is fine, but if the main contribution of the paper involves human subjects, then as much detail as possible should be included in the main paper. 
        \item According to the NeurIPS Code of Ethics, workers involved in data collection, curation, or other labor should be paid at least the minimum wage in the country of the data collector. 
    \end{itemize}

\item {\bf Institutional review board (IRB) approvals or equivalent for research with human subjects}
    \item[] Question: Does the paper describe potential risks incurred by study participants, whether such risks were disclosed to the subjects, and whether Institutional Review Board (IRB) approvals (or an equivalent approval/review based on the requirements of your country or institution) were obtained?
    \item[] Answer: \answerNA{}
    \item[] Justification: No human subjects were involved in this research.
    \item[] Guidelines:
    \begin{itemize}
        \item The answer NA means that the paper does not involve crowdsourcing nor research with human subjects.
        \item Depending on the country in which research is conducted, IRB approval (or equivalent) may be required for any human subjects research. If you obtained IRB approval, you should clearly state this in the paper. 
        \item We recognize that the procedures for this may vary significantly between institutions and locations, and we expect authors to adhere to the NeurIPS Code of Ethics and the guidelines for their institution. 
        \item For initial submissions, do not include any information that would break anonymity (if applicable), such as the institution conducting the review.
    \end{itemize}

\item {\bf Declaration of LLM usage}
    \item[] Question: Does the paper describe the usage of LLMs if it is an important, original, or non-standard component of the core methods in this research? Note that if the LLM is used only for writing, editing, or formatting purposes and does not impact the core methodology, scientific rigorousness, or originality of the research, declaration is not required.
    %this research? 
    \item[] Answer: \answerYes{} % Replace by \answerYes{}, \answerNo{}, or \answerNA{}.
    \item[] Justification: The paper proposes AdaReasoner as a plugin to adaptively configure LLM reasoning parameters. LLMs serve as the target models whose responses are evaluated and optimized through reinforcement learning. The usage is central to the methodology and fully described in Sections 1 and 3.
    \item[] Guidelines:
    \begin{itemize}
        \item The answer NA means that the core method development in this research does not involve LLMs as any important, original, or non-standard components.
        \item Please refer to our LLM policy (\url{https://neurips.cc/Conferences/2025/LLM}) for what should or should not be described.
    \end{itemize}

\end{enumerate}

\end{document}